\theoremstyle{plain}
\newtheorem{theorem}{Theorem}[section]
\newtheorem{proposition}[theorem]{Proposition}
\newtheorem{lemma}[theorem]{Lemma}
\newtheorem{corollary}[theorem]{Corollary}
\theoremstyle{definition}
\newtheorem{definition}[theorem]{Definition}
\newtheorem{assumption}[theorem]{Assumption}
\newtheorem{remark}[theorem]{Remark}
\newtheorem{property}[theorem]{Property}
\DeclareMathOperator{\E}{\mathbb{E}}
\newcommand{\norm}[1]{\left\lVert#1\right\rVert}
\newcommand{\bs}[1]{\boldsymbol{#1}}
\DeclareMathOperator*{\argmin}{arg\,min}
\icmltitlerunning{Probabilistic Bilevel Coreset Selection}
\begin{document}

\twocolumn[

\icmltitle{Probabilistic Bilevel Coreset Selection}



\icmlsetsymbol{equal}{*}

\begin{icmlauthorlist}
\icmlauthor{Xiao Zhou}{equal,yyy}
\icmlauthor{Renjie Pi}{equal,yyy}
\icmlauthor{Weizhong Zhang}{equal,yyy}
\icmlauthor{Yong Lin}{yyy}
\icmlauthor{Tong Zhang}{yyy,goog}
\end{icmlauthorlist}

\icmlaffiliation{yyy}{The Hong Kong University of Science and Technology}
\icmlaffiliation{goog}{Google Research}
\icmlcorrespondingauthor{Tong Zhang }{tongzhang@tongzhang-ml.org}


\icmlkeywords{Machine Learning, ICML, coreset selection, bilevel optimization, probabilistic method}

\vskip 0.3in]



\printAffiliationsAndNotice{\icmlEqualContribution} 

\begin{abstract}
The goal of coreset selection in supervised learning is to produce a weighted subset of data, so that training only on the subset achieves similar performance as training on the entire dataset. Existing methods achieved promising results in resource-constrained scenarios such as continual learning and streaming. However, most of the existing algorithms are limited to traditional machine learning models. A few algorithms that can handle large models adopt greedy search approaches due to the difficulty in solving the discrete subset selection problem, which is computationally costly when coreset becomes larger and often produces suboptimal results. 
In this work, for the first time we propose a continuous probabilistic bilevel formulation of coreset selection by learning a probablistic weight for each training sample. The overall objective is posed as a bilevel optimization problem, where 1) the inner loop samples coresets and train the model to convergence and 2) the outer loop updates the sample probability progressively according to the model's performance. Importantly, we develop an efficient solver to the bilevel optimization problem via unbiased policy gradient without trouble of implicit differentiation. We provide the convergence property of our training procedure and demonstrate the superiority of our algorithm against various coreset selection methods in various tasks, especially in more challenging label-noise and class-imbalance scenarios.
\end{abstract}

\section{Introduction\label{sec:introduction}}
In the last decade,  deep neural networks (DNNs) have achieved tremendous successes in multiple areas such as computer vision \cite{simonyan2014very,he2016deep} and natural language processing \cite{NIPS2017_3f5ee243}. These superior performances are mostly achieved via learning from huge amounts of data. However, this data-driven paradigm also poses several new challenges: 1) the cumbersome dataset becomes harder to store and transfer; 2) for some real applications, such as continual learning, one can only access a small number of training data at each stage of training; 3) in some more extreme scenarios, where the training data is incorrectly labelled, or they are collected from different domains, more training data may even hurt the model's performance.
To address these issues,  a natural idea is to select a small subset (i.e., {\it coreset}) comprised of most informative training samples, such that training on this subset can achieve comparable or even better performance with that on the full dataset, which is verified in Appendix \ref{sec:more-results}.  Therefore, how to construct a good coreset for DNNs now  becomes a crucial problem.

We notice that, coreset selection has been investigated for the traditional machine learning models, e.g., SVM \cite{tsang2005core}, logistic regression \cite{huggins2016coresets} and Gaussian mixture model 
\cite{lucic2017training}, for a long time to accelerate the training process  and lots of effective methods have been developed. The idea of these studies is to find a small weighted training subset, whose objective function is close to the one of the full training set at any point in the parameter space. A typical formulation is uniform function approximation, that is to find a small subset $\hat{\mathcal{D}}$ together with the non-negative weights $\hat{\mathcal{W}}=\{w_i: i \in \hat{\mathcal{D}}\}$ from the training dataset $\mathcal{D}$ satisfying that 
\begin{equation}
|\mathcal{L}(\bs{\theta})-\hat{\mathcal{L}}(\bs{\theta})| \leq \epsilon \mathcal{L}(\bs{\theta}), \mbox{ for any } \bs{\theta} \in \mathbb{R}^p, \label{eqn:coreset-shallow-model}
\end{equation}
where $\mathcal{L}(\bs{\theta}) = \frac{1}{|\mathcal{D}|}\sum_{i\in \mathcal{D}} \ell(\bs{\theta}; \mathbf{x}_i, \mathbf{y}_i)$ is the objective function on the full dataset and $\hat{\mathcal{L}}(\bs{\theta}) = \frac{1}{|\hat{\mathcal{D}}|}\sum_{i\in \hat{\mathcal{D}}} w_i \ell(\bs{\theta}; \mathbf{x}_i, \mathbf{y}_i)$ is the one on the selected subset $\hat{\mathcal{D}}$, $\epsilon$ is a small number to control the approximation error. When $\epsilon$ is small enough, the comparable performance of the model learned from $\hat{\mathcal{L}}$ can be guaranteed. Nevertheless, it has been shown that these methods cannot be applied to  DNNs directly \cite{borsos2020coresets}. The reason is that as DNNs are always highly nonconvex and the hypothesis set is significantly larger than traditional models, to obtain small uniform approximation error $\epsilon$, one has to select a very large coreset $\hat{\mathcal{D}}$. 

More recently, bilevel optimization \cite{borsos2020coresets} has been introduced to construct the coreset for DNNs.  Their motivation is that the only thing we really care about is the performance of the model trained on
the coreset, i.e., the optimum of $\hat{\mathcal{L}}$, instead of achieving small approximation error for the loss function  in the {\it whole} parameter space. Therefore,  the  bilevel optimization with a cardinality constraint on $\hat{\mathcal{D}}$ presented below would be a more reasonable framework for constructing the coreset:
\begin{gather}
    \min_{\hat{\mathcal{W}}, \hat{\mathcal{D}}\subset \mathcal{D}, |\hat{\mathcal{D}}|\leq K} \mathcal{L}(\bs{\theta}^*(\mathcal{W}, \hat{\mathcal{D}}))\label{eqn:discrete-weighted-bilevel-opt}\\
     s.t.~~ \bs{\theta}^*(\hat{\mathcal{W}}, \hat{\mathcal{D}}) = \arg \min _{\bs{\theta}} \hat{\mathcal{L}}(\bs{\theta}).\nonumber
\end{gather}
However, despite some promising empirical results are reported, we notice that existing methods adopt greedy strategies to update $\hat{\mathcal{D}}$ as the outer loop involves optimization on the discrete variable set $\hat{\mathcal{D}}$, which is NP hard. For example, \cite{borsos2020coresets} starts from several randomly sampled training data and then sequentially adds sample with the largest gradient with respect to the outer objective into $\hat{\mathcal{D}}$ in each iteration. Such greedy methods generally lead to suboptimal performance and the selected coreset would be unnecessarily large.
The drawback becomes more prominent if the initially sampled data has bad quality, which may misguide the selection, especially when the dataset contains noisy labels or is class-imbalanced.

In this paper, to address the above issue, we propose a probabilistic coreset selection method based on bilevel optimization for DNNs. In contrast with the greedy methods, our key idea is to continualize the discrete bilevel optimization problem  above by probabilistic reparameterization, making the gradient-based optimization possible. In this way, we can explore the entire dataset and progressively improve the quality of the coreset during training. To be precise, we first assign each training data $i$ with a binary mask $m_i$ to indicate whether data $i$ is included in the coreset or not. Then, to continualize the problem, we parameterize $m_i$ to be a Bernoulli variable with the probability $s_i$ to be 1 and $1-s_i$ to be 0.  Thus, the cardinality of the coreset can be roughly controlled by the sum of $s_i$ and coreset selection is transformed into the problem of learning these probabilitis $s_i$. Therefore, we formulate coreset selection as a continuous bilevel optimization problem with a sparsity constraint (Eqn. (\ref{cont_formulation})). In the inner loop, we sample a coreset according to  the probability $s_i$ and use it to train the model (see Eqn. (\ref{inner_loop})). In the outer loop, we minimize the loss of the learned model on the full dataset by adjusting the sample probabilities. We develop an efficient optimization algorithm (Algorithm \ref{alg:SST}) utilizing unbiased policy gradient estimator, which calculates the probabilities' gradients via only forward instead of backward propagation and avoids the complicate calculation of implicit differentiation.

Notably, our proposed method has the following advantages: 

1) Our algorithm gradually improves the quality of the selected subset by explores the training set globally unlike greedy methods which make urgent decisions at early stage and cannot remove redundant data once added to the coreset.

2) Our algorithm obtains much more competence in the setting with label noise or class imbalance without the bother of making decisions on adding detrimental or redundant data at early stage. In contrast, the misselected data may mislead the future selection for greedy methods.

3) Our algorithm develops an efficient policy gradient solver to the bilevel optimization problem without cumbersome implicit gradient calculations. 



Moreover, we provide  the property of our training algorithm in convergence. We demonstrate the superiority of our method through extensive experimental results on various tasks, including data summarization, continual learning, streaming and feature selection and surpass state-of-the-art methods by a large margin.

Our main contributions can be  summarized as follows:
\begin{itemize}
\item To the best of our knowledge, our method is the first global coreset selection method for DNNs, where we propose a novel continualized bilevel optimization formulation and develop an efficient policy gradient solver without implicit gradient calculation.

\item We provide the convergence property showing that  our optimization method can converge similarly with  the standard nonconvex projected
stochastic gradient descent algorithm. 
\item We empirically demonstrate the superiority of our method through various experiments including data summarization, continual learning, streaming, feature selection, especically in more challenging senarios with noisy labels and class imbalance.
\end{itemize}

\section{Related Works}
\subsection{Coreset Selection}
Coreset selection aims to solve the problem of finding the most informative subset from the full set, which can be used to solve the optimization problem and obtain similar performance as the full set. Several coreset selection methods are designed for specific learning algorithms, such as K-means \cite{feldman2011unified, har2007smaller}, SVM \cite{tsang2005core}, logistic regression \cite{huggins2016coresets} and Gaussian mixture model \cite{lucic2017training, feldman2011scalable}. These methods only work for traditional models and can not be directly adopted for DNNs. \cite{borsos2020coresets} proposed a bilevel optimization framework for coreset selection for DNN. However, due to the difficulty in solving the discrete optimization problem, they adopt greedy search algorithm to sequentially select new samples. Since the samples selected in the early stage can never be removed afterwards, their method generally results in suboptimal performance and deteriorates severely in challenging scenarios, where the initial samples can have bad quality and misguide the search. Moreover, the greedy-based algorithm faces expensive computational cost due to its demand in solving a bilevel optimization problem for every sample added to the coreset. Our method searches the coreset globally and does not suffer from those drawbacks.



\vspace{-.2cm}
\subsection{Continual Learning and Streaming}
Continual learning (CL) \cite{kirkpatrick2017overcoming, lopez2017gradient,rebuffi2017icarl} aims to tackle the scenario where a series of different tasks are learnt sequentially using the same model. In this work, we mainly focus on replay-based CL methods, which keeps a constant number of data of previous tasks to alleviate the catastrophic forgetting problem. Streaming \cite{aljundi2019gradient,hayes2019memory,chrysakis2020online} is more challenging in the sense that it does not have the concept of tasks and data is sequentially given to the model. In these cases, coresets can be adopted to construct the replay memory to choose the informative data which well represents each task.

\vspace{-.2cm}
\subsection{Feature Selection} Feature selection (FS) \cite{cai2018feature, li2017feature, miao2016survey} aims to select a subset of important features to represent the original data, which reduces the computation and storage cost. The majority of existing works of feature selection are mainly focused on traditional models  \cite{gunecs2010multi, sulaiman2015feature, radovic2017minimum}. However, to the best of our knowledge, the exploration in this direction for deep neural networks is limited. We note that feature selection can also be considered as an instance level coreset selection task which can be naturally addressed by our proposed framework, where the selected features can be viewed as the coreset.

\vspace{-.2cm}
\subsection{Dataset Distillation}
An alternative approach for dataset compression is dataset distillation \cite{wang2018dataset,nguyen2020dataset}, which is inspired by knowledge distillation \cite{hinton2015distilling, gou2021knowledge, yao2021g}. Instead of distilling knowledge from the model parameters, dataset distillation learns a few synthetic data points for each class.
These methods work well when the model for deployment is the same as the one used for learning the synthetic data. However, as the learnt data points also encode information of the model's architecture and initialization weights, 
their performances drop significantly when the synthetic data learned on one model is used to train another model. In contrast, coreset selection is not sensitive to the model, since we do not alter the data directly.

\subsection{Bilevel Optimization}
Bilevel optimization \cite{sinha2017review} has garnered a lot of attention in recent years due to its ability to handle hierarchical decision making processes. Previous works utilize bilevel optimization in multiple areas of research, such as hyper-paramter optimization \cite{lorraine2020optimizing, maclaurin2015gradient,pedregosa2016hyperparameter, mackay2019self, franceschi2017forward,vicol2021unbiased}, meta learning \cite{finn2017model, nichol2018reptile}, neural architecture search \cite{pham2018efficient, liu2018darts, pham2018efficient, shi2020bridging, yao2021joint, gao2022unison, gao2021autobert, shi2021sparsebert} and sample re-weighting \cite{ren2018learning, shu2019meta, wang2022training, maple}. Prior to our work, \cite{borsos2020coresets} formulates coreset selection into a bi-level optimization problem and solves it using a greedy algorithm.















\section{Probabilistic Bilevel Coreset Selection}
In this section, we first present our probabilistic coreset selection framework in Section \ref{sec:framework} and then develop an efficient  training method for this framework in Section \ref{sec:optimization}. 

\subsection{Bilevel Framework for Coreset Selection}\label{sec:framework}

Consider a neural network $f(\bs{x}; \bs{\theta})$ with $\bs{\theta}$ being the trainable  parameters and $\mathcal{D}=\{(\mathbf{x}_i, \mathbf{y}_i)\}_{i=1}^{n}$ is the training dataset, we first formulate coreset selection into the following discrete bilevel optimization paradigm:
{\small
\begin{gather}
\min_{\bs{m} \in \tilde{\mathcal{C}}}\tilde{\Phi}(\bs{m})= \mathcal{L}(\boldsymbol{\theta}^{*}(\bs{m})) = \frac{1}{n} \sum_{i=1}^{n} \ell(f(\mathbf{x}_i; \boldsymbol{\theta}^{*}(\bs{m})), \mathbf{y}_i) , \label{ori_formulation}\\
s.t. ~\boldsymbol{\theta}^{*}(\bs{m}) \in \argmin_{\boldsymbol{\theta}} \hat{\mathcal{L}}(\bs{\theta}; \bs{m}) = \frac{1}{K}\sum_{i=1}^{n} m_i\ell(f(\mathbf{x}_i; \boldsymbol{\theta}), \mathbf{y}_i),\nonumber
\end{gather}}
where the mask $\bs{m}\in \{0,1\}^n$ is a binary vector with $m_i=1$ indicating   sample $i$ is selected into the coreset and otherwise excluded. $K$ is a positive integer controlling the coreset size and $\tilde{\mathcal{C}} = \{\bs{m}: m_i = 0 ~\text{or} ~1, \norm{\bs{m}}_0 \leq K\}$ is the feasible region of $\bs{m}$. Intuitively, the inner loop trains the network to converge on the selected coreset to obtain the model $\boldsymbol{\theta}^{*}\left(\bs{m}\right)$. The outer loop evaluates the loss of $\boldsymbol{\theta}^{*}\left(\bs{m}\right)$ on the full set and optimizes it to guide the learning of $\bs{m}$. 


\begin{remark}The difference between our discrete bilevel formulation  (\ref{ori_formulation}) and the existing one in (\ref{eqn:discrete-weighted-bilevel-opt}) is that our formulation has no weight $w_i$ for each sample in the coreset.  We remove these weights for two considerations: 1) our empirical results show that we can achieve good performance without weighting the coreset; 2) it enables us to develop extremely efficient training algorithm (see Section \ref{exp:time_complexity}).
\end{remark}

Noticing that the discrete nature of the mask $\bs{m}$ makes directly solving the above bilevel optimization problem intractable, we now turn to continualize it by probabilistic reparameterization, making gradient based optimization method possible. Our main idea is to view each mask $m_i$ as an independent  binary random variable and transform the problem (\ref{ori_formulation}) from optimizing in the discrete vector space into the probability space, which is continuous. Specifically, we reparameterize $m_i$ as a Bernoulli random variable with probability $s_i$ to be $1$ and $1-s_i$ to be $0$, that is $m_i \sim \operatorname{Bern}(s_i)$, where $s_i \in [0,1]$. Assuming the variables $m_i$ are independent, then we can get the distribution function of $\boldsymbol{m}$, i.e.,  $p(\boldsymbol{m}|\boldsymbol{s}) = \Pi_{i=1}^{n} (s_i)^{m_i}(1-s_i)^{(1-m_i)}$. Thus, we can control the 
coreset size via the sum of the probabilities $s_i$, i.e.,  $\boldsymbol{1}^\top\boldsymbol{s}$, since $\mathbb{E}_{\boldsymbol{m}\sim p(\boldsymbol{m}|\boldsymbol{s})}\|\boldsymbol{m}\|_0 = \sum_{i=1}^n s_i$. Therefore, $\tilde{\mathcal{C}}$ can be approximately transformed into $\mathcal{C} = \{\bs{s}: 0 \preceq \bs{s} \preceq 1, \norm{\bs{s}}_1 \leq K\}$. Finally, problem (\ref{ori_formulation}) can be naturally  relaxed  into the following excepted loss minimization problem: 
\begin{gather}
\min_{\bs{s}\in \mathcal{C}} \displaystyle ~\Phi(\bs{s})=\mathbb{E}_{ p(\boldsymbol{m}|\boldsymbol{s})} ~ \mathcal{L}(\boldsymbol{\theta}^{*}(\bs{m})), \label{cont_formulation}\\
s.t. ~\boldsymbol{\theta}^{*}(\bs{m}) \in \argmin_{\boldsymbol{\theta}} \hat{\mathcal{L}}(\bs{\theta}; \bs{m})\label{inner_loop}
\end{gather}
where $\mathcal{C} = \{\bs{s}: 0 \preceq \bs{s} \preceq 1, \norm{\bs{s}}_1 \leq K\}$ is the domain.

Some appealing features of our Formulation (\ref{cont_formulation}) are:
\begin{itemize}
    \item Our formulation is a tight relaxation (although not equivalent) of Problem (\ref{ori_formulation}). The reasons are:
    \begin{itemize}
    \item It is easy to know that $\min_{\bs{s}\in \mathcal{C}} \Phi(\bs{s})\leq \min_{\bs{m}\in \tilde{\mathcal{C}}}\tilde{\Phi}(\bs{m})$ as any deterministic binary mask $\bs{m}$ can be represented as a particular stochastic one by letting $\bs{s}_i$ be either $0$ or $1$.
    \item Our constraint $\mathcal{C}$ induces sparsity on $\bs{s}$ due to the $\ell_1$-norm and the range $[0,1]$, making most components of the optimal $\bs{s}$ either $0$ or $1$. That is, our finally learned stochastic mask is  nearly deterministic, which will be empirically verified in Section \ref{exp:convergence}. 
    \end{itemize}
    \item Due to our sparsity constraint, the selected coreset size of the inner loop, i.e., $\|\bs{m}\|$ is always small, which makes the optimization of $\theta^*$ very efficient.
    \item As shown in Eqn.(\ref{eqn:ori-PGE}), our outer objective is $\Phi(\bs{s})$ is differentiable, allowing us to use general gradient based methods for optimization. 
    \end{itemize}

\subsection{Optimization} \label{sec:optimization}
Existing bilevel optimization algorithms \cite{pedregosa2016hyperparameter,grazzi2020iteration, grazzi2021convergence} are often computationally costly due to the expensive implicit differentiation in their chain-rule based gradient estimator. To be precise, if applied to our problem, they generally estimate the gradient in the form of 
\begin{align*}
   \nabla_{\bs{s}}\Phi(\bs{s}) \approx \nabla_{\bs{s}} \bs{\theta}^*(\bs{m}) \nabla_{\bs{\theta}} \mathcal{L}(\bs{\theta}^*(\bs{m})).
\end{align*}
Hence, they need to compute the implicit differentiation of the inner loop optimum, i.e, $\nabla_{\bs{s}}\bs{\theta}^*(\bs{m})$,  which is  expensive since they have to compute the inverse of a huge hessian matrix or unroll the backward propagation for multiple steps. 

Even though some efficient bilevel optimization algorithms have been proposed to alleviate the computational burden, for instance, \cite{lorraine2020optimizing} adopted Neumann series to approximate the hessian inverse, the approximation still requires much time and leads to inefficiency.

Thanks to our probabilistic formulation of the bilevel problem, we are able to avoid these expensive computations by using Policy Gradient Estimator (PGE), which calculates the gradient using forward instead of backward propagation. Our key idea can be illustrated by the following equations:
\begin{align}
    \nabla_{\bs{s}}\Phi( \bs{s}) 
    =& \nabla_{\bs{s}} \E_{p(\boldsymbol{m}|\boldsymbol{s})} \mathcal{L}\left(\boldsymbol{\theta}^{*}( \bs{m})\right)\nonumber \\
    =& \nabla_{\bs{s}} \int \mathcal{L}\left(\boldsymbol{\theta}^{*}( \bs{m})\right) p(\boldsymbol{m}|\boldsymbol{s}) d\bs{m}\nonumber\\
    =&\int \mathcal{L}\left(\boldsymbol{\theta}^{*}( \bs{m})\right) \frac{\nabla_{\bs{s}} p(\boldsymbol{m}|\boldsymbol{s})}{p(\boldsymbol{m}|\boldsymbol{s})}p(\boldsymbol{m}|\boldsymbol{s}) d\bs{m}\nonumber \\
    =&\int \mathcal{L}\left(\boldsymbol{\theta}^{*}(\bs{m})\right) \nabla_{\bs{s}}\ln  p(\boldsymbol{m}|\boldsymbol{s})p(\boldsymbol{m}|\boldsymbol{s}) d\bs{m}\nonumber \\
    =& \mathbb{E}_{p(\boldsymbol{m}|\boldsymbol{s})} \mathcal{L}\left(\boldsymbol{\theta}^{*}( \bs{m})\right) \nabla_{\bs{s}}\ln  p(\boldsymbol{m}|\boldsymbol{s}). \label{eqn:ori-PGE}
\end{align}
It shows that $\mathcal{L}\left(\boldsymbol{\theta}^{*}( \bs{m})\right) \nabla_{\bs{s}}\ln  p(\boldsymbol{m}|\boldsymbol{s})$ is an unbiased stochastic gradient of $\nabla_{\bs{s}}\Phi(\bs{s})$, which is called policy gradient. Therefore, given the inner loop optimum $\bs{\theta}^{*}(\bs{m})$, we can update $\bs{s}$ by projected stochastic gradient descent: 
\begin{align}
        \bs{s} \leftarrow  \mathcal{P}_{\mathcal{C}}\left(\bs{s} - \eta \mathcal{L}\left(\boldsymbol{\theta}^{*}( \bs{m})\right) \nabla_{\bs{s}}\ln  p(\boldsymbol{m}|\boldsymbol{s}) \right). \tag{PGE} \label{eqn:PGE}
\end{align}
It is clear that \ref{eqn:PGE}  does not involve any implicit differentiation and its component $\mathcal{L}\left(\boldsymbol{\theta}^{*}( \bs{m})\right)$ can be computed via forward propagation. Moreover, $\ln  p(\boldsymbol{m}|\boldsymbol{s})$ has a very simple form and this projection has a closed form solution (given in the appendix \ref{sec:proj}) since the constraint $\mathcal{C}$ is quite simple. Therefore, we can update $\bs{s}$ via \ref{eqn:PGE} very efficiently. 

\begin{remark}
As we mentioned in Section \ref{sec:framework}, we remove the weights from the original framework (\ref{eqn:discrete-weighted-bilevel-opt}) because: 1) we empirically find that training using the coreset with binary weights can already achieve competitive performance, and 2) if the weights are not removed, then $\bs{\theta}(\bs{m})$ would be $\bs{\theta}(\bs{w}, \bs{m})$ and we have to compute implicit differentiation to get $\nabla_{\bs{w}} \bs{\theta}^*(\bs{w}, \bs{m})$, since the gradient of deterministic variable cannot be estimated via PGE.
\end{remark}

\begin{algorithm}[htb!]
\caption{Probabilistic Bilevel Coreset Selection }
\label{alg:SST}
\begin{algorithmic}[1]
\REQUIRE a network $\bs{\theta}$, dataset $\mathcal{D}$ and coreset size $K$.
\STATE Initialize probabilities $\bs{s}^1=\frac{K}{|\mathcal{D}|} \mathbf{1}$.
\FOR{training iteration $t = 1, 2 \ldots T$}
\STATE Sample mask $\bs{m}$ according to the probability $\bs{s}^1$. 
\STATE Train the inner loop to converge satisfies: 
\[
\boldsymbol{\theta}^{*}(\bs{m}) \leftarrow \argmin_{\boldsymbol{\theta}} \hat{\mathcal{L}}(\bs{\theta};  \bs{m})
\]
\STATE Sample a mini-batch of data: 
\[
\mathcal{B} = \left\{\left(\mathbf{x}_{1}, \mathbf{y}_{1}\right), \ldots,\left(\mathbf{x}_{B}, \mathbf{y}_{B}\right)\right\}
\]
\STATE Update $\bs{s}$ using PGE based on $\boldsymbol{\theta}^{*}( \bs{m})$ and $\mathcal{B}$: 
\[
\bs{s}^{t+1} \leftarrow  \mathcal{P}_{\mathcal{C}}\left(\bs{s}^{t} - \eta \mathcal{L}_{\mathcal{B}}(\bs{\theta}^*(\bs{m})) \nabla_{\bs{s}}\ln  p(\boldsymbol{m}|\boldsymbol{s}^t) \right)
\]
\ENDFOR

\OUTPUT The coreset $\{(\mathbf{x}_i, \mathbf{y}_i): m_i \neq 0 \mbox{ and } {(\mathbf{x}_i, \mathbf{y}_i)} \in \mathcal{D}\}$ with $\bs{m}$ sampled from $p(\boldsymbol{m}|\boldsymbol{s}^{T+1})$.
\end{algorithmic}
\end{algorithm}

Hence, we can solve our bilevel optimization problem (\ref{cont_formulation}) by alternatively: 1) sampling a mask $\bs{m}$, i.e., a coreset, from $p(\bs{m}|\bs{s})$ for the inner loop and train the model on this coreset to get $\bs{\theta}^*(\bs{m})$; 2) updating the probability $\bs{s}$ using \ref{eqn:PGE}. The detailed steps are given in Algorithm \ref{alg:SST}. Notably, our algorithm has the following advantages: 
\begin{itemize}
   \item While the greedy methods can never remove the redundant data once they are added to the coreset, our algorithm behaves like a process of sampling coreset with replacement (Step 4), where the quality of the selected subset is evaluated on the outer objective and the sampling probablity $\bs{s}$ is adjusted accordingly (Step 6) to progressively improve the coreset quality. This enables us to explore the training set more globally.
    \item Due to our sparsity constraint $\mathcal{C}$, most of the probabilities $s_i$ would automatically converge to either 0 or 1 during optimization, thus the uncertainty of the obtained coreset can be finally reduced to nearly 0. This is empirically verified in Section \ref{exp:convergence}. 
    \item The superiority of our method is more prominent in more challenging tasks involving data with corrupted labels and class imbalance, which are shown in the Section \ref{sec:experiments}. The reason is that in those scenarios, more explorations are needed to gain a global view of the entire set before deciding which samples to be added to the coreset, whereas greedy algorithms have to start making decisions in the early stage without enough knowledge (demonstrated in Section \ref{exp:noise_analysis}). 
    \item As we discussed above, our method is computationally efficient because 1) for the outer loop, PGE enables updating the probability without computing any implicit differentiation; 2) for the inner loop , the selected coreset size is always small, which makes the derivation of $\theta^*$ efficient; and 3) as opposed to the greedy algorithm, the running time of our method does not increase rapidly with the coreset size, since the number of outer updates is fixed for all coreset sizes.
\end{itemize}

The property below shows that if we solve the inner loop problem to convergence, our training algorithm can converge similarly with the standard nonconvex projected stochastic gradient descent algorithms \cite{ghadimi2016mini}. 
\begin{property}\label{thm:convergence} [Informal] Under the mild assumptions on $\Phi(\bs{s})$ and the step size $\eta$, then the average of the  expectation of the  gradient mapping norm, i.e., $$\|\frac{1}{\eta}\left(\bs{s}^{t}-\mathcal{P}_{\mathcal{C}}(\bs{s}^{t}-\eta \nabla_{\bs{s}}\Phi(\bs{s}^{t}))\right)\|_2,$$ can converge to a small value as $T\rightarrow \infty$.
\end{property}

\section{Experiments}\label{sec:experiments}

We conduct the following experiments in common application scenarios of coreset selection: 1) data summarization, where the selected coreset is directly used to train the model; 2) continual learning \cite{kirkpatrick2017overcoming, lopez2017gradient,rebuffi2017icarl} and streaming \cite{aljundi2019gradient,hayes2019memory,chrysakis2020online}, where coresets are selected from training data to construct the replay memory and resist catastrophic forgetting after sequentially learning a series of tasks; 3) feature selection \cite{cai2018feature, li2017feature, miao2016survey}, where only a subset of features are selected for training and inference.

In reality, the quality of training data can not be guaranteed. For instance, the data for each task of continual learning may not be balanced or even mislabelled. To test the effectiveness of our method on these challenging scenarios, for each of those applications mentioned above, we create more difficult settings by imposing label noise and class imbalance in the training data. We observe that other approaches including the greedy coreset method \cite{borsos2020coresets} fails significantly in such settings, while our method can still discover promising coresets, which is credited to the learnt global information before constructing the final coreset.

\subsection{Data Summarization}\label{sec:coreset}
\begin{figure*}[ht!]
	\centering
    \vspace{-2mm}
	\includegraphics[width=1.0\textwidth]{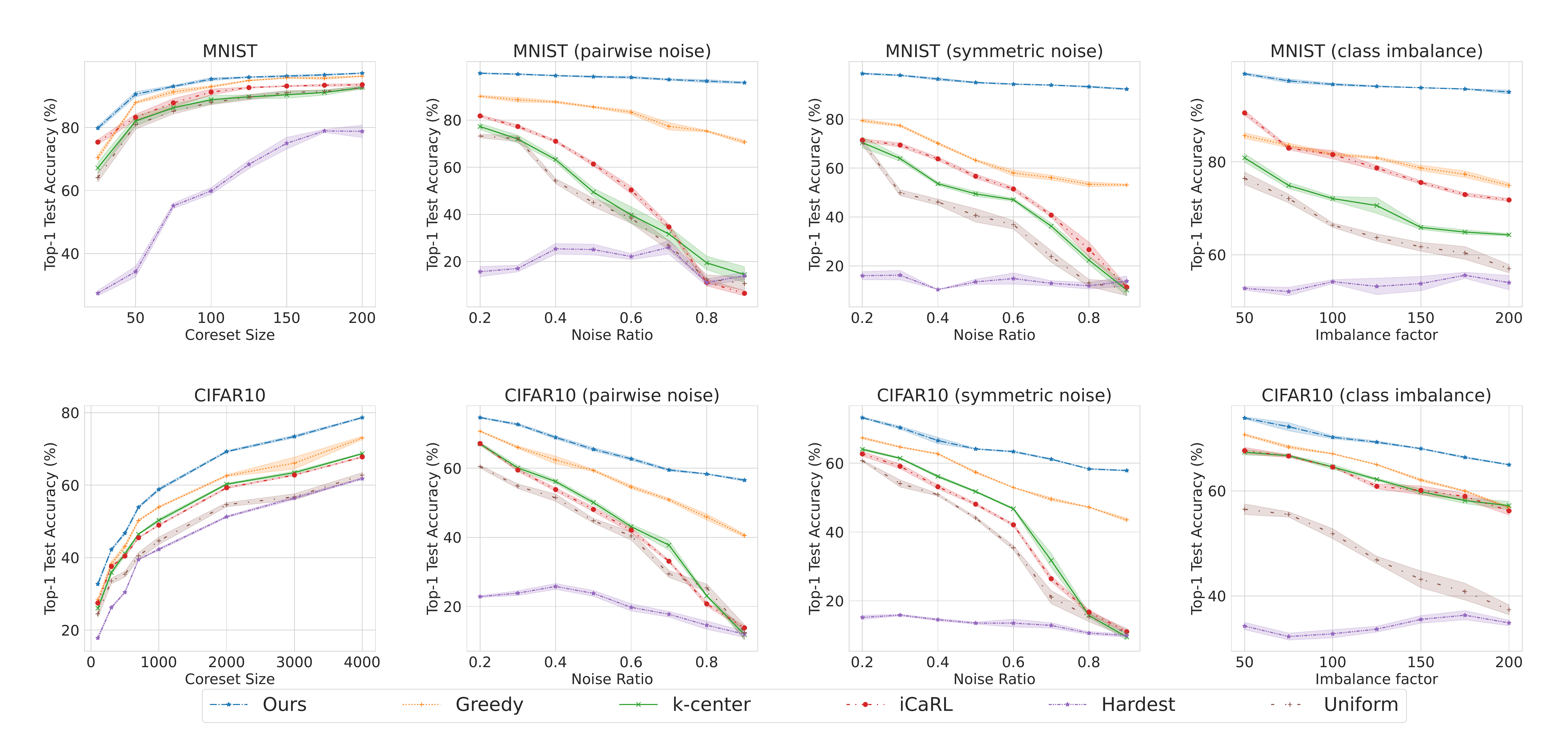}
    \vspace{-8mm}
    \caption{Performance comparison between our method and other baselines on data summarization task with various coreset sizes and different scenarios. For experiments with label noise and class imbalance, the coreset size is set to 1000 for MNIST and 5000 for CIFAR10. Our method consistently surpasses other baselines by a large margin. Notably, the performance of our method is stable even under challenging settings, while other methods begin to fail significantly.}
    \vspace{-3mm}
	\label{fig:data_summerization}
\end{figure*}

We examine the quality the coreset constructed by our method via evaluating the performance of the neural network trained on the produced coresets. Specifically, we conduct experiments on two widely used benchmarks, i.e., MNIST \cite{deng2012mnist} and CIFAR10 \cite{krizhevsky2009learning}. To make a fair comparison, we follow \cite{borsos2020coresets} to use the same model settings (shown in the Appendix \ref{sec:experimental-configuration}) when conducting the experiments.

We compare with the following competitive baselines: 1) Uniform sampling, 2) K-center clustering using the embedding from last layer \cite{sener2017active}, 3) iCaRL's selection \cite{rebuffi2017icarl}, 4) Hardest sampling \cite{aljundi2019task} and 5) Greedy coreset \cite{borsos2020coresets}. The results in the first column of Figure \ref{fig:data_summerization} demonstrate that our method consistently outperforms other baselines across various coreset sizes. Notably, our method's superiority is more prominent in the small-sized region. It is interesting that hardest sampling fails for data summarization tasks, which can be that the majority of hard samples come from only a few classes and therefore causes the selected coreset to be unbalanced.

\subsection{Data Summarization with Label Noise and Class Imbalance}\label{sec:coreset_imbalance}
We further conduct experiments in more challenging and practical scenarios, where the dataset contains corrupted labels and class-imbalanced data to showcase the effectiveness of our method. The model setting is the same as stated in Section \ref{sec:coreset} and the outer objective is calculated based on a held-out balanced validation dataset with 100 samples, comprised of 10 uniformly sampled data from each class.


For the label noise experiment, we adopt 2 types of noises: pairwise noise and symmetric noise. For the class imbalance experiment, we adopt similar setting as in \citet{cui2019class}. The detailed descriptions are given in the Appendix \ref{sec:experimental-configuration} due to space limit. The coreset sizes for the MNIST and CIFAR10 experiments are 1000 and 5000, respectively.

In these challenging settings, our algorithm's advantage over other methods becomes much more prominent, as shown in Figure \ref{fig:data_summerization}. Notably, our method is much less sensitive to the quality of entire dataset, which can be credited to the global information learnt by repeatedly sampling before constructing the final coreset. On the other hand, the greedy counterpart begins to fail dramatically as the label noise ratio and imbalance factor grow higher, which is in line with our intuition that greedy methods can be severely affected if the samples in the early phase are redundant or even detrimental. More analysis is given in Section \ref{exp:noise_analysis}.

\subsection{Continual Learning and Streaming}
Two important applications of coreset construction are continual learning \cite{kirkpatrick2017overcoming, lopez2017gradient,rebuffi2017icarl, wang2022memory,wang2022learning} and streaming \cite{aljundi2019gradient,hayes2019memory,chrysakis2020online, zou2019sufficient}. Specifically, continual learning aims to learn a series of tasks sequentially using the same model, and a constant number of data can be reserved for previous tasks to alleviate the catastrophic forgetting of early knowledge. It is thus essential to choose the most informative data for each task when constructing the replay memory. Streaming is similar to continual learning except that the data stream is not divided into tasks, which is more challenging.

\textbf{Continual Learning.} We conduct experiments on the following datasets commonly adopted by the community: 1) PermMNIST \cite{goodfellow2013empirical} constructs 10 tasks by performing different random permutation on the image pixels for each task; 2) SplitMNIST \cite{zenke2017continual} splits MNIST into five tasks, each containing two adjacent classes; and 3) SplitCIFAR10 is based on CIFAR10 and splitted in the same way as SplitMNIST. To make a fair comparison, we use the same settings as adopted in \cite{borsos2020coresets}, the details are shown in the Appendix \ref{sec:experimental-configuration}. Moreover, we design more challenging CL tasks with label noise and class imbalance to showcase the superiority of our method, where symmetric noise with 20\% noise ratio and class imbalance factor of 50 is applied to each dataset.


We compare our algorithm with data selection methods mentioned in Section \ref{sec:coreset}. The results demonstrated in Table \ref{continual_learning_table} verify that
our proposed method consistently dominates other baselines across all tasks. Our advantage is more prominent especially under challenging settings, where the performance of other methods drop significantly and our method continues to perform well. Remarkably, our method surpasses the greedy counterpart \cite{borsos2020coresets} by around 10\% on tasks with label noise.

\textbf{Streaming.} We follow \citet{borsos2020coresets} to conduct experiment in streaming setting (details are in the Appendix).
We compare our method with Reservoir sampling \cite{vitter1985random} and greedy coreset \cite{borsos2020coresets}. We also conduct experiments on label noise scenario, where symmetric label noise with 20\% noise ratio is imposed to the data. The results in Table \ref{tab:streaming} demonstrate that our method dominates other baselines on these streaming tasks.

The success on CL and streaming tasks proves that our method can select informative data that well represent each task to construct the replay memory, which performs well even under challenging conditions.

\begin{table*}[t]
\caption{Experiment result on continual learning for PermMNIST, SplitMNIST and CIFAR10 datasets. Normal stands for the standard dataset without modification; Noise represents symmetric label noise with 20\% corruption ratio and Imbalance means the data has an class imbalance factor of 50.
As demonstrated, using our proposed approach to construct the replay memory consistently surpasses other methods. Remarkably, the advantage of our global search strategy against the greedy counterpart becomes more prominent on challenging tasks: for tasks with label noise, our method surpasses the greedy counterpart by around 10\%.}
\label{continual_learning_table}
\vskip 0.15in
\begin{center}
\vspace{-2mm}
\begin{small}
\begin{tabular}
{
l m{1cm}<{\centering} m{1cm}<{\centering} m{1cm}<{\centering} m{1cm}<{\centering} m{1cm}<{\centering} m{1cm}<{\centering} m{1cm}<{\centering} m{1cm}<{\centering} m{1cm}<{\centering} }
\toprule
Datasets & \multicolumn{3}{c}{PermMNIST}&\multicolumn{3}{c}{SplitMNIST}& \multicolumn{3}{c}{SplitCIFAR-10}\\
 &Normal&Noise&Imbalance&Normal&Noise&Imbalance&Normal&Noise&Imbalance \\ \cmidrule(l){2-4}\cmidrule(l){5-7}\cmidrule(l){8-10}
Uniform&78.46&32.12&43.70&93.70&44.32&53.32& 36.20&15.80&20.50\\
k-center embeddings  & 78.57&37.53&54.53& 94.55&51.89&71.50& 36.91&16.68&21.44\\
Hardest samples    & 76.79&15.38&38.72& 91.57&17.23&56.39& 28.10&9.63&15.63\\
iCaRL    & 79.68 &39.36&67.53& 95.13&60.99&72.23& 34.52&18.47&25.47\\
Greedy Coreset      & 79.26&65.84&68.67& 96.50&81.75&84.58&  37.60&24.23&31.28  \\
Ours    & \textbf{80.60} &\textbf{74.26}&\textbf{75.32}&\textbf{98.15}&\textbf{92.23}&\textbf{94.30}& \textbf{39.10}&\textbf{31.20}&\textbf{35.30}\\
\bottomrule
\end{tabular}
\end{small}
\end{center}
\vspace{-1mm}
\vskip -0.2in
\end{table*}
\begin{table}[t]
\caption{Experiment result on Streaming for MNIST with and without label noise. Using our proposed approach to construct the replay memory consistently surpasses other methods significantly.}
\label{tab:streaming}
\begin{center}
\begin{small}
\begin{tabular}
{
l m{0.5cm}<{\centering} m{0.5cm}<{\centering} m{0.5cm}<{\centering} m{0.5cm}<{\centering}
}
\toprule
Datasets&\multicolumn{2}{c}{PermMNIST}&\multicolumn{2}{c}{SplitMNIST}\\
&Normal&Noise&Normal&Noise\\\cmidrule(l){2-3}\cmidrule(l){4-5}
Reservoir sampling & 73.21 &25.82& 90.72&22.03\\
Greedy Coreset      & 74.44&61.30& 92.59&82.52\\
Ours    & \textbf{75.5}&\textbf{70.33}& \textbf{94.20}&\textbf{90.69}\\
\bottomrule
\end{tabular}
\end{small}
\end{center}
\end{table}
\subsection{Feature Selection}
\begin{figure}[ht!]
	\centering
	\includegraphics[width=0.45\textwidth]{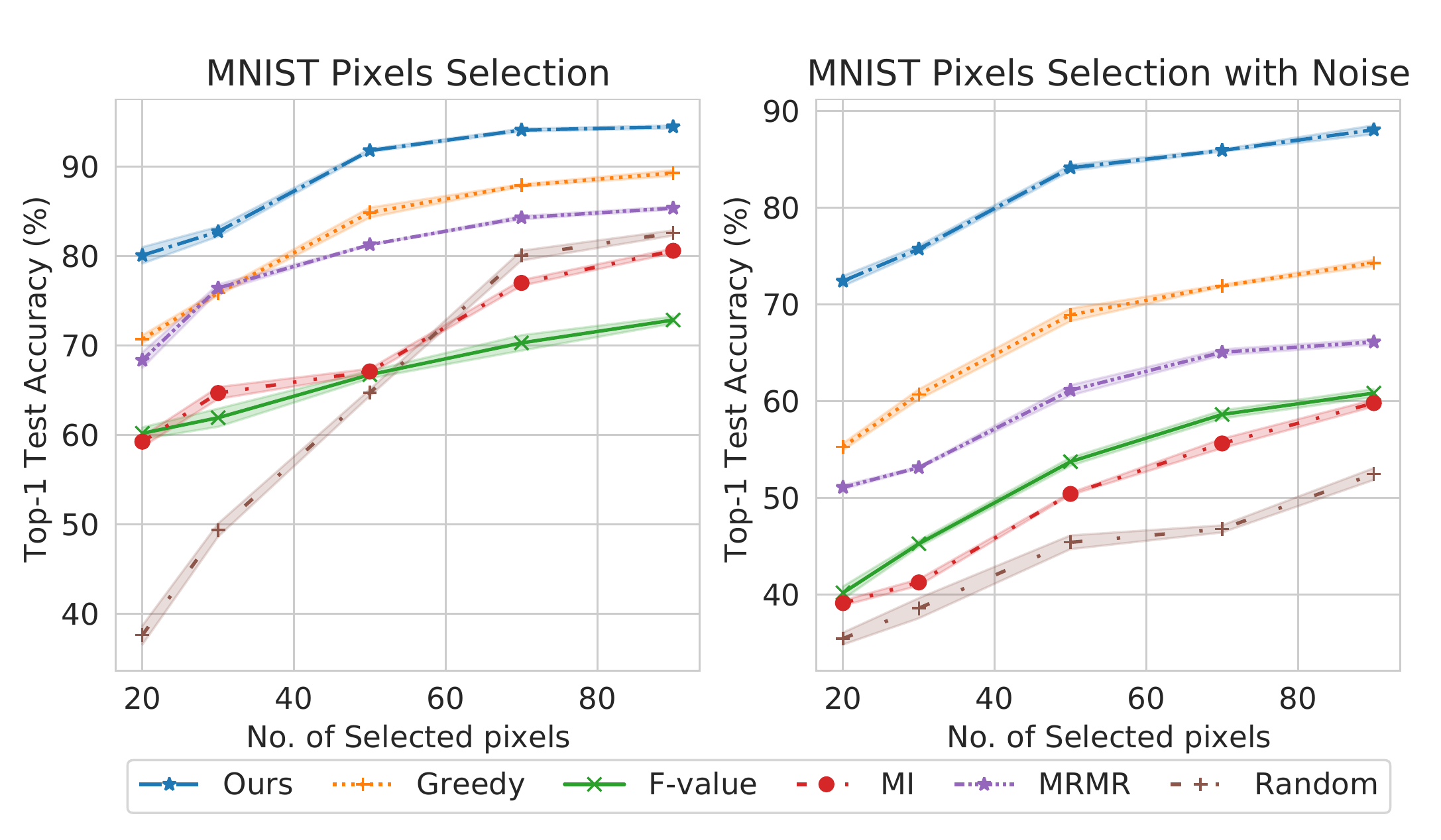}
    \caption{Left: We select a subset of pixel locations, then retain the corresponding pixels of all the training data for training and inference. Right: Pixel selection on dataset with gaussian noise applied to the images. Our method consistently surpasses other baselines by a large margin given different subset size constraints and the advantage is more evident with gaussian noise.}
	\label{fig:pixel_selection}
\end{figure}

We apply our algorithm for feature selection (FS) tasks on the MNIST dataset to further validate its effectiveness. FS can be naturally viewed as a coreset construction problem, where the selected subset of features is the coreset. The adaptation of our method in this case is straightforward: the inner loop is to train the model using the selected pixels of the images, while the outer loop is to update the probability of each pixel based on the loss on all the pixels.

Specifically, we select $n$ pixels locations and the corresponding pixels are used during both training and inference. As shown in Figure \ref{fig:pixel_selection}, for various number of selected features, our method consistently outperforms other baselines including F-score \cite{gunecs2010multi}, mutual information \cite{sulaiman2015feature}, MRMR \cite{radovic2017minimum} and greedy coreset \cite{borsos2020coresets} by a large margin. The selected pixels are demonstrated in Figure \ref{fig:selected_pixels}, which shows that our method is able to retain the most informative features. To further showcase the advantage of our algorithm under more challenging scenario, we impose gaussian noise (with mean set to 0 and stand deviation 2.5) to the image pixels. As demonstrated on the right part of Figure \ref{fig:pixel_selection}, while the performance of most other methods deteriorate significantly, our method still achieves stable performance.

\begin{figure}[ht!]
	\centering
	\includegraphics[width=0.48\textwidth]{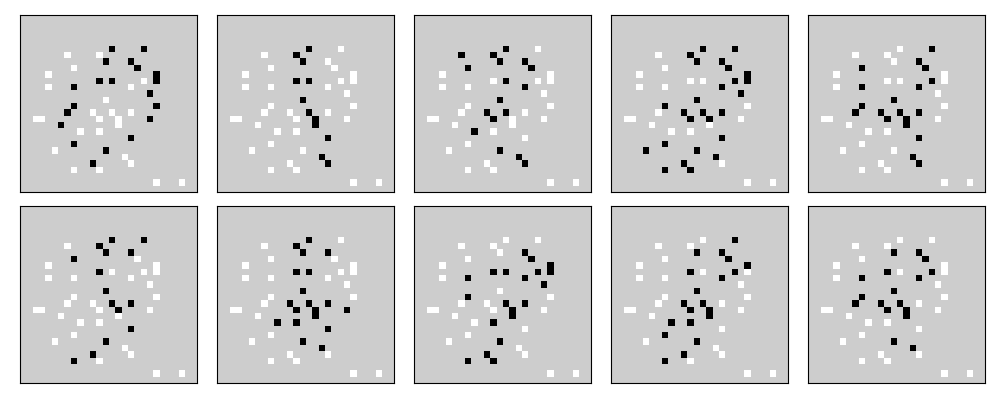}
    \caption{Visualization of selected pixels by our algorithm, where the numbers from 0 to 9 are demonstrated. The white pixels are selected by our algorithm and the black dots are those overlapped with the digits. We can see that the selected pixel locations effectively capture the important information of the images.}
	\label{fig:selected_pixels}
\end{figure}
\section{Ablation Study and Analysis} \label{sec:ablation}
\subsection{Advantage of Global Algorithm Compared with Greedy Counterpart}\label{exp:noise_analysis}
\begin{figure}[ht!]
	\centering
	\includegraphics[width=0.48\textwidth]{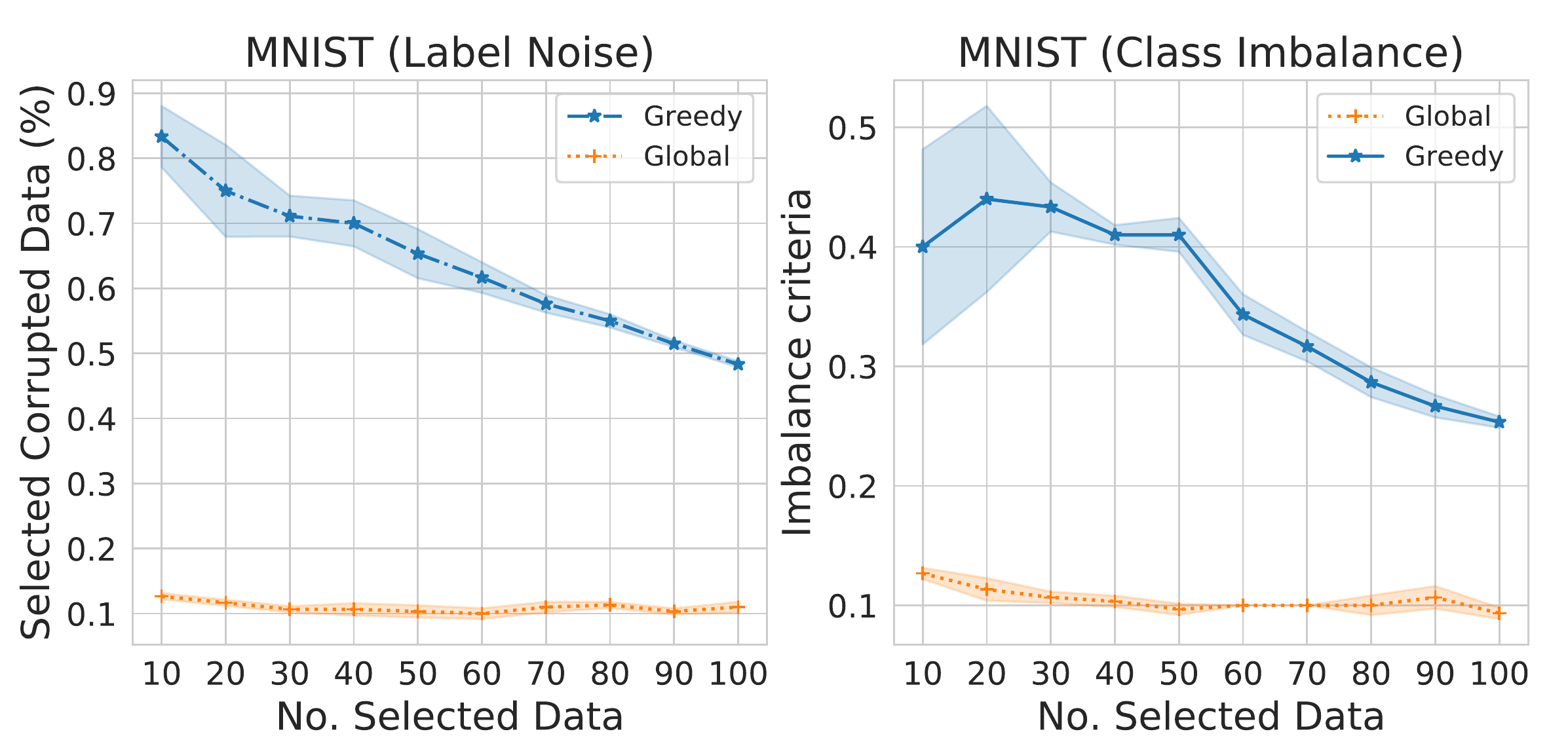}
    \caption{Analysis of how the quality of coreset changes as the coreset size increases. Left: The noise ratio of samples in the coreset. Right: The class imbalance of samples in the coreset, where the y-axis is defined as the difference between the most class and the least class, then devided by the total coreset size. We can see that the poor data quality at the early phase of greedy-based method misguides the search and results in suboptimal coreset.}
	\label{fig:noise_analysis}
\end{figure}

In this experiment, we analyze how the quality of coreset changes as the size increases using MNIST with 1) $90\%$ corrupted labels and 2) imbalance factor of 200. Specifically, we
monitor the change of mislabeling ratio and imbalance criterion in the selected coreset as the size increases. As shown in Figure \ref{fig:noise_analysis}, at the early stage of greedy coreset (blue line), the quality of selected data is much worse due to the lack of global knowledge. Those samples can never be removed and continues to misguide the search, which finally leads to suboptimal coreset. On the other hand, since our method (orange line) samples subsets globally with replacement during the optimization, more information can be learned in this trial-and-error process before arriving at the final coreset (more analysis in Section \ref{thm:convergence}). 
\subsection{Evolution of Coreset during Search}\label{exp:convergence}
\begin{figure}[ht!]
	\centering
    \vspace{-2.3mm}
	\includegraphics[width=0.48\textwidth]{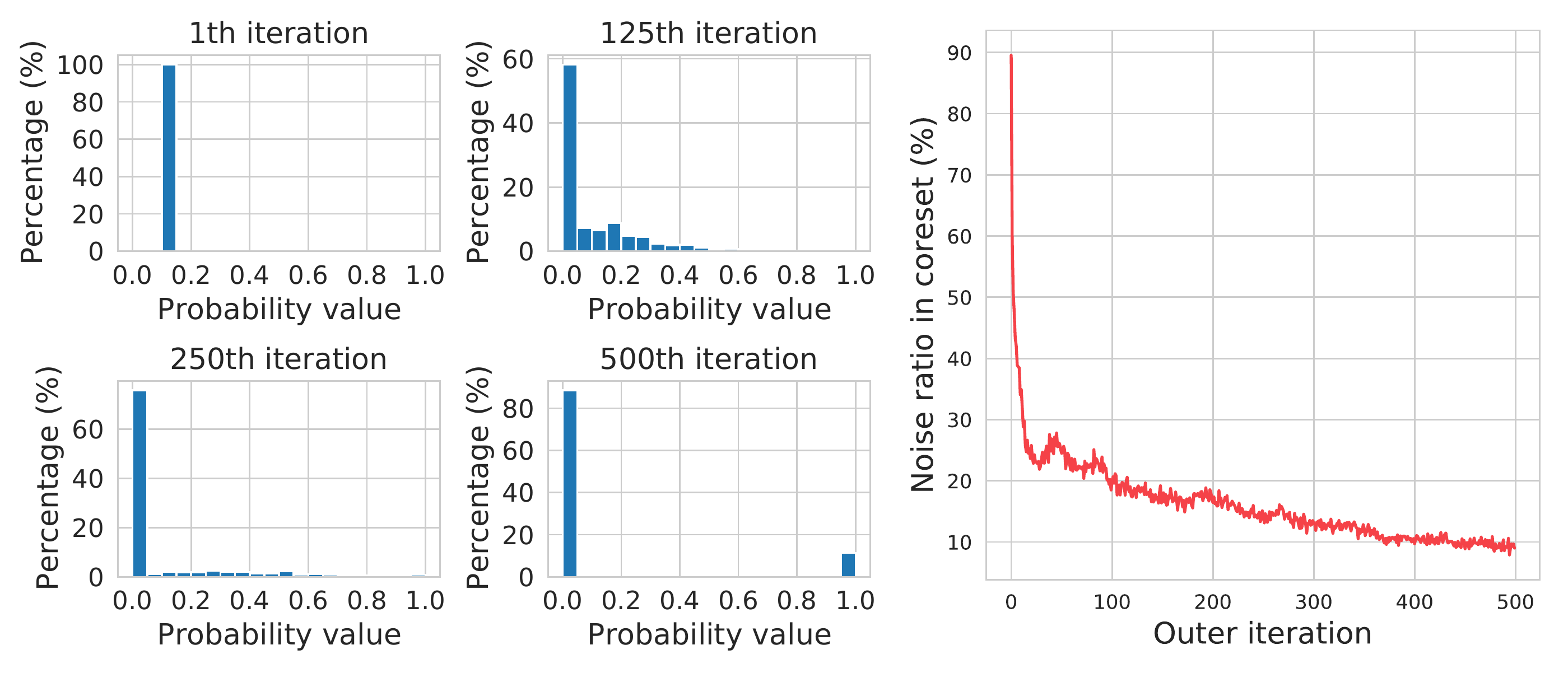}
    \vspace{-5mm}
    \caption{Left: The distribution of probability scores during the search. As the search progresses, most of the score values converge to either 0 or 1, which eventually renders a deterministic coreset with low variance. Right: the noise ratio in the selected coreset as the outer iteration increases. Our method progressively improves the quality of the coreset by learning the global information and updating the probability distribution accordingly.}
    \vspace{-2.5mm}
	\label{fig:probibility_convergence}
\end{figure}

In Figure \ref{fig:probibility_convergence}, we analyze the search process of our method. In this experiment, we select 100 samples from 1000 training data with noise ratio set to 0.9. The left part visualizes how the distribution of probabilities evolve as the search goes on. The initial sample probabilities are evenly distributed and equal to 0.1. As the search goes on, most of the probabilities converge to either 0 or 1, i.e., the uncertainty is gradually reduced to 0, which generates a nearly deterministic sparse mask with low variance. In the right part we show how the noise ratio in the selected coreset evolves. We can observe that the noise ratio continues to decrease, which verifies that our method is able to progressively improve the quality of the coreset by learning the global information.
\subsection{Time Complexity Analysis} \label{exp:time_complexity}
\begin{figure}[ht!]
	\centering
    \vspace{-4.5mm}
	\includegraphics[width=0.48\textwidth]{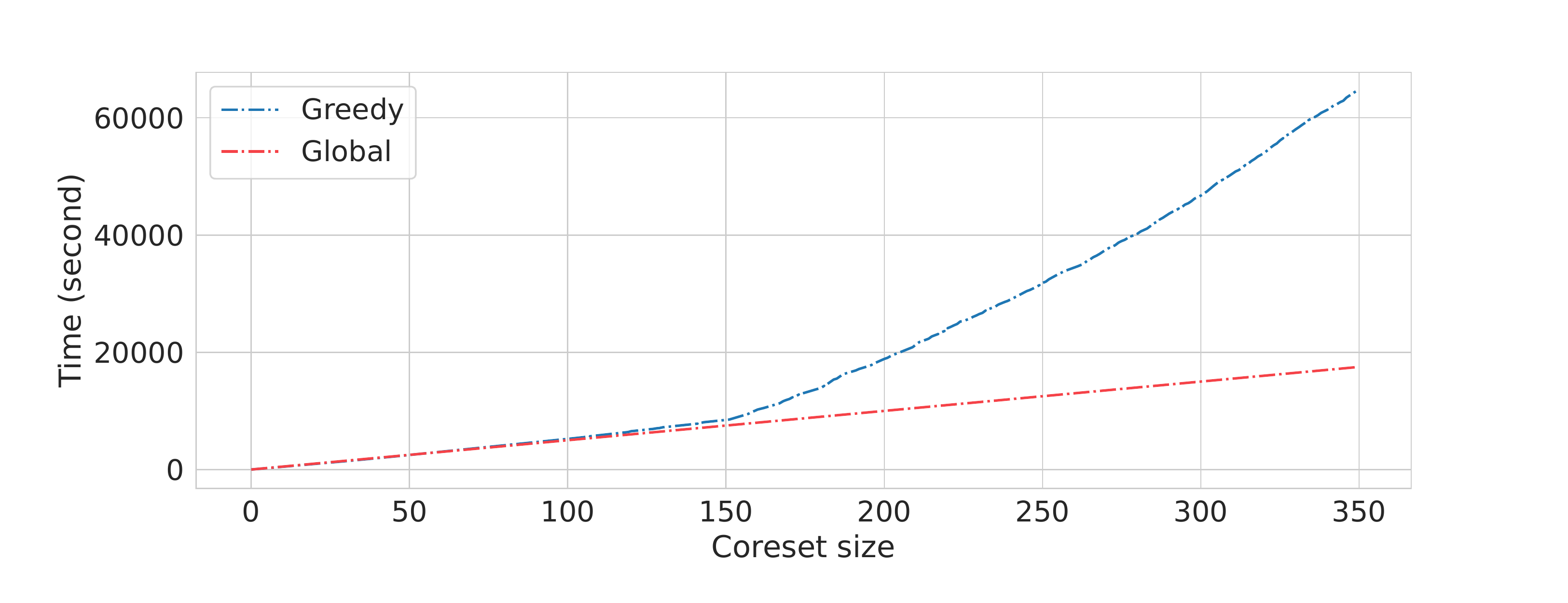}
    \vspace{-8mm}
    \caption{Comparison of time consumption between our method and the greedy counterpart. As the greedy coreset selection method needs to solve a bilevel problem for every newly added sample, the cost increases rapidly with the coreset size.}
    \vspace{-4mm}
	\label{fig:time_process}
\end{figure}

The comparison of time complexities between our method and the greedy counterpart is shown in Figure \ref{fig:time_process}. Since the greedy coreset selection method \cite{borsos2020coresets} needs to solve a bilevel optimization problem for every newly added sample, the time complexity increases rapidly with the coreset size. On the other hand, the time required by our method is not sensitive to the coreset size as the number of outer iterations remains fixed. Furthermore, owing to the efficiency of policy gradients, the update of sample probabilities takes much less time. 
\section{Conclusion}
In this paper, we propose a global coreset selection algorithm based on bilevel optimization and adopt probabilistic reparameterization to continualize the discrete optimization problem. Our method is computationally efficient and achieves promising results even on challenging scenarios with label noise or imbalanced classes. We theoretically prove its convergence and conduct extensive experiments on various tasks to demonstrate its superiority.

\section*{Acknowledgements}
This work is supported by GRF 16201320.

\bibliography{example_paper}
\bibliographystyle{icml2022}

\newpage
\appendix
\onecolumn
\newpage
\appendix
\onecolumn
\icmltitle{Supplementary Materials: Probabilistic Bilevel Coreset Selection}

This appendix can be divided into the following parts:
\begin{itemize}
    \item In Section \ref{sec:proof}, we provide the proof of our property to show the convergence of our method.
    \item In Section \ref{sec:proj}, we give the algorithm for calculating the projection onto our constraint set $\mathcal{C}$.
    \item In Section \ref{sec:experimental-configuration}, we give the detailed configurations of our experiments.
    \item In Section \ref{sec:more-results}, we give more experimental results. 
    \item In Section \ref{sec:future_works}, we present discussions on future works.
\end{itemize}

\section{Proof of Property \ref{thm:convergence}}\label{sec:proof}
At first, we would like to rephrase our theorem into a more formal form below  by adding some assumptions following \cite{pedregosa2016hyperparameter}. 
\begin{theorem}\label{thm:app-convergence}
We assume $\Phi(\bs{s})$  is $L$-smooth, and the policy gradient variance  $\mathbb{E}\|\mathcal{L}_{\mathcal{B}}(\bs{\theta}^*(\bs{m})) \nabla_{\bs{s}}\ln  p(\boldsymbol{m}|\boldsymbol{s}) - \nabla_{\bs{s}}\Phi(\bs{s})\|^2 \leq \sigma^2$. Let $\eta< 1/L$ and we denote the gradient mapping $\mathcal{G}^t$ at $t$-th iteration as  $$\mathcal{G}^t=\frac{1}{\eta}\left(\bs{s}^{t}-\mathcal{P}_{\mathcal{C}}(\bs{s}^{t}-\eta \nabla_{\bs{s}}\Phi(\bs{s}^{t}))\right),$$ then   we have
\begin{align}
     \frac{1}{T}\sum_{t=1}^T \mathbb{E} \|\mathcal{G}^t\|^2 \leq \frac{8-2L\eta}{2-L\eta}\sigma^2, \label{eqn:app-bound}
\end{align}
when $T \rightarrow \infty$.
\end{theorem}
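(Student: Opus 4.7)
My plan is to follow the standard template for convergence of projected stochastic gradient descent in the nonconvex setting, as developed by Ghadimi and Lan, adapting it to the notion of gradient mapping used in the theorem. Denote the stochastic policy gradient at iteration $t$ by $g^t := \mathcal{L}_{\mathcal{B}}(\bs{\theta}^*(\bs{m}))\nabla_{\bs{s}}\ln p(\bs{m}|\bs{s}^t)$; by Eqn.~(\ref{eqn:ori-PGE}) and the assumption of bounded variance, $\mathbb{E}[g^t\mid\bs{s}^t]=\nabla\Phi(\bs{s}^t)$ and $\mathbb{E}\|g^t-\nabla\Phi(\bs{s}^t)\|^2\le\sigma^2$. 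Write the stochastic gradient mapping as $\hat{\mathcal{G}}^t := \tfrac{1}{\eta}(\bs{s}^t-\bs{s}^{t+1})$ and the deterministic one as in the theorem statement, $\mathcal{G}^t := \tfrac{1}{\eta}(\bs{s}^t-\mathcal{P}_{\mathcal{C}}(\bs{s}^t-\eta\nabla\Phi(\bs{s}^t)))$.

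The first main step is a descent-type inequality for one iteration. Starting from $L$-smoothness of $\Phi$, I would bound $\Phi(\bs{s}^{t+1})-\Phi(\bs{s}^t)$ by $\langle\nabla\Phi(\bs{s}^t),\bs{s}^{t+1}-\bs{s}^t\rangle+\tfrac{L}{2}\|\bs{s}^{t+1}-\bs{s}^t\|^2$. Splitting $\nabla\Phi(\bs{s}^t)=g^t+(\nabla\Phi(\bs{s}^t)-g^t)$ and applying the first-order optimality condition for the projection, namely $\langle\bs{s}^{t+1}-\bs{s}^t+\eta g^t,\bs{s}^t-\bs{s}^{t+1}\rangle\ge 0$, gives $\langle g^t,\bs{s}^{t+1}-\bs{s}^t\rangle\le -\eta\|\hat{\mathcal{G}}^t\|^2$. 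Combined with a Young-inequality bound $\langle\nabla\Phi(\bs{s}^t)-g^t,\bs{s}^{t+1}-\bs{s}^t\rangle\le \tfrac{\eta}{2}\|\nabla\Phi(\bs{s}^t)-g^t\|^2+\tfrac{\eta}{2}\|\hat{\mathcal{G}}^t\|^2$, this produces a per-step inequality of the form
\begin{equation*}
\Phi(\bs{s}^{t+1})\le \Phi(\bs{s}^t)-\tfrac{\eta(1-L\eta)}{2}\|\hat{\mathcal{G}}^t\|^2+\tfrac{\eta}{2}\|\nabla\Phi(\bs{s}^t)-g^t\|^2.
\end{equation*}

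The second step is to pass from $\hat{\mathcal{G}}^t$ to $\mathcal{G}^t$. By the nonexpansiveness of $\mathcal{P}_{\mathcal{C}}$ I get $\|\hat{\mathcal{G}}^t-\mathcal{G}^t\|\le\|g^t-\nabla\Phi(\bs{s}^t)\|$, and then $\|\mathcal{G}^t\|^2\le 2\|\hat{\mathcal{G}}^t\|^2+2\|g^t-\nabla\Phi(\bs{s}^t)\|^2$, i.e.\ $\|\hat{\mathcal{G}}^t\|^2\ge \tfrac12\|\mathcal{G}^t\|^2-\|g^t-\nabla\Phi(\bs{s}^t)\|^2$. Plugging this into the descent inequality, taking expectations (using $\mathbb{E}\|g^t-\nabla\Phi(\bs{s}^t)\|^2\le\sigma^2$), telescoping from $t=1$ to $T$, and dividing by $T$ yields
\begin{equation*}
\tfrac{\eta(1-L\eta)}{4T}\sum_{t=1}^T\mathbb{E}\|\mathcal{G}^t\|^2 \;\le\; \tfrac{\Phi(\bs{s}^1)-\Phi^\star}{T}+\left(\tfrac{\eta(1-L\eta)}{2}+\tfrac{\eta}{2}\right)\sigma^2,
\end{equation*}
where $\Phi^\star$ is a lower bound on $\Phi$. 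Letting $T\to\infty$ kills the first term and, after dividing through by the coefficient $\tfrac{\eta(1-L\eta)}{4}$, leaves the claimed bound $\tfrac{8-2L\eta}{2-L\eta}\sigma^2$.

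The main obstacle I anticipate is getting the algebraic constants to match the statement exactly; in particular, the tradeoff in the Young inequality for the cross term $\langle\nabla\Phi-g^t,\bs{s}^{t+1}-\bs{s}^t\rangle$ must be tuned (the $\tfrac{\eta}{2}/\tfrac{\eta}{2}$ split above is one natural choice) so that the coefficient on $\|\hat{\mathcal{G}}^t\|^2$ remains strictly positive under $\eta<1/L$ and the resulting bound, after translating back to $\|\mathcal{G}^t\|^2$ via the $2$-to-$1$ inequality, collapses to $\tfrac{8-2L\eta}{2-L\eta}\sigma^2$. A secondary subtlety is justifying that $\mathbb{E}\Phi(\bs{s}^{T+1})$ stays bounded below (needed to discard the $\Phi$ telescoping term as $T\to\infty$), which follows from the boundedness of $\mathcal{C}$ and continuity of $\Phi$ on this compact feasible set.
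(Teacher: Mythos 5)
Your overall architecture matches the paper's: the $L$-smoothness descent step, the projection optimality condition giving $\langle \bs{g}^t,\bs{s}^{t+1}-\bs{s}^t\rangle\le-\eta\|\hat{\mathcal{G}}^t\|^2$ (the paper's Lemma \ref{lemma:1} in disguise), the nonexpansiveness bound $\|\hat{\mathcal{G}}^t-\mathcal{G}^t\|\le\|\bs{g}^t-\nabla\Phi(\bs{s}^t)\|$ (Lemma \ref{lemma:2}), the $\|\mathcal{G}^t\|^2\le 2\|\hat{\mathcal{G}}^t\|^2+2\|\delta^t\|^2$ conversion, telescoping, and the observation that $\Phi$ is bounded below on the compact set $\mathcal{C}$. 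The one place you diverge is fatal to the stated constant: you bound the noise cross term $\eta\langle\delta^t,-\hat{\mathcal{G}}^t\rangle$ (with $\delta^t=\bs{g}^t-\nabla\Phi(\bs{s}^t)$) by Young's inequality, which charges an extra $\tfrac{\eta}{2}\|\hat{\mathcal{G}}^t\|^2$ to the descent term. Carrying your constants through, the per-step coefficient on $\|\hat{\mathcal{G}}^t\|^2$ is $\tfrac{\eta(1-L\eta)}{2}$ instead of $\eta(1-\tfrac{L\eta}{2})$, and the asymptotic bound you obtain is $\tfrac{4-2L\eta}{1-L\eta}\sigma^2$. This exceeds the claimed $\tfrac{8-2L\eta}{2-L\eta}\sigma^2$ for every $\eta\in(0,1/L)$ (their difference equals $\tfrac{2L\eta}{(1-L\eta)(2-L\eta)}\sigma^2>0$) and blows up as $\eta\to 1/L$, so it does not prove the theorem as stated. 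The tuning you propose cannot rescue it: writing $\eta\langle\delta^t,-\hat{\mathcal{G}}^t\rangle\le\tfrac{\eta}{2a}\|\delta^t\|^2+\tfrac{a\eta}{2}\|\hat{\mathcal{G}}^t\|^2$ and optimizing over $a$, the best achievable constant is $2+8/(2-L\eta)^2$, still strictly larger than $2+4/(2-L\eta)=\tfrac{8-2L\eta}{2-L\eta}$ whenever $L\eta>0$.

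The missing idea is to use that $\delta^t$ has zero conditional mean rather than only bounded second moment. The paper splits $\langle\delta^t,\hat{\mathcal{G}}^t\rangle=\langle\delta^t,\mathcal{G}^t\rangle+\langle\delta^t,\hat{\mathcal{G}}^t-\mathcal{G}^t\rangle$: the first term vanishes in expectation by the tower property, because $\mathcal{G}^t$ is a deterministic function of $\bs{s}^t$ while $\E[\delta^t\mid\bs{s}^t]=0$; the second is at most $\|\delta^t\|\,\|\hat{\mathcal{G}}^t-\mathcal{G}^t\|\le\|\delta^t\|^2$ by exactly the nonexpansiveness you already invoke. This keeps the full coefficient $\eta(1-\tfrac{L\eta}{2})$ on $\|\hat{\mathcal{G}}^t\|^2$ and charges only $\eta\sigma^2$ to the noise, giving $\tfrac{1}{T}\sum_{t}\E\|\hat{\mathcal{G}}^t\|^2\le\tfrac{\Phi(\bs{s}^1)-\Phi^*}{\eta(1-L\eta/2)T}+\tfrac{\sigma^2}{1-L\eta/2}$, and then the $2$-to-$1$ conversion yields exactly $\tfrac{8-2L\eta}{2-L\eta}\sigma^2$ as $T\to\infty$. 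With that single substitution the rest of your argument goes through.
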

\begin{remark}
We would like to point out  the following things:
\begin{itemize}
    \item We give this theorem just to show that our algorithm works well similarly with  the general projected/proximal stochastic gradient descent algorithms for one level optimization problems, e.g., \cite{ghadimi2016mini}, instead of to show how fast our algorithm can converge. Therefore, we do not consider the techniques, such as variance reduction, to improve the convergence rate of our algorithm, which are out of the scope of this work.
    \item $\sigma^2$ is actually the variance of our gradient estimator PGE with a single mask. It can become smaller with the techniques, such as sample more masks and  use larger batch size in PGE. Therefore, the LHS of Eqn.(\ref{eqn:app-bound}) can converge to a small value. 
    \item Our experimental results show that our algorithm can work well even if we sample only one mask in each iteration. Therefore, in the above theorem, we give the result when only one mask is sampled, to make it consistent with the settings of our experiments. 
\end{itemize}

\end{remark}

Before giving the detailed proof, we need the following lemmas about the properties of the projection operator, which can be found in \cite{ghadimi2016mini}. 
\begin{lemma}[firmly nonexpansive operators] \label{lemma:1}Given a compact convex set $\mathcal{C} \subset \mathbb{R}^d$ and let $\mathcal{P}_{\mathcal{C}}(\cdot)$ be the projection operator on $\mathcal{C}$, then for any $\bs{u}\in \mathbb{R}^d$ and $\bs{v}\in \mathbb{R}^d$, we have 
\begin{align*}
    \|\mathcal{P}_{\mathcal{C}}\left(\bs{u}\right)-\mathcal{P}_{\mathcal{C}}\left(\bs{v}\right)\|^2 \leq \left(\bs{u}-\bs{v}\right)^\top \left(\mathcal{P}_{\mathcal{C}}\left(\bs{u}\right)-\mathcal{P}_{\mathcal{C}}\left(\bs{v}\right)\right). 
\end{align*}
\end{lemma}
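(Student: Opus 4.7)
The plan is to prove this classical firmly nonexpansive property via the variational characterization of the projection operator (also known as the obtuse angle property or first-order optimality condition for the projection). Specifically, I would first recall that since $\mathcal{C}$ is a closed convex set and $\mathcal{P}_{\mathcal{C}}(\bs{x})$ is the unique minimizer of $\tfrac{1}{2}\|\bs{x}-\bs{y}\|^2$ over $\bs{y}\in\mathcal{C}$, the first-order optimality condition yields the variational inequality
\begin{equation*}
(\bs{x} - \mathcal{P}_{\mathcal{C}}(\bs{x}))^\top (\bs{y} - \mathcal{P}_{\mathcal{C}}(\bs{x})) \leq 0 \qquad \forall\, \bs{y}\in \mathcal{C}.
\end{equation*}

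Next I would instantiate this inequality twice: once with $\bs{x}=\bs{u}$ and $\bs{y}=\mathcal{P}_{\mathcal{C}}(\bs{v})\in\mathcal{C}$, and once with $\bs{x}=\bs{v}$ and $\bs{y}=\mathcal{P}_{\mathcal{C}}(\bs{u})\in\mathcal{C}$. This gives two scalar inequalities whose sum, after grouping the $\bs{u}-\bs{v}$ term on one side and a $-\|\mathcal{P}_{\mathcal{C}}(\bs{u})-\mathcal{P}_{\mathcal{C}}(\bs{v})\|^2$ term on the other, is exactly the claimed inequality (up to a sign flip).

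There is essentially no obstacle here — the entire proof is three or four lines once the variational inequality is in hand. The only thing that deserves a brief comment is why the variational inequality itself holds: it follows because for any $\bs{y}\in\mathcal{C}$ and $t\in[0,1]$ the convex combination $(1-t)\mathcal{P}_{\mathcal{C}}(\bs{x}) + t\bs{y}$ lies in $\mathcal{C}$ by convexity, so the scalar function $t\mapsto \|\bs{x} - (1-t)\mathcal{P}_{\mathcal{C}}(\bs{x}) - t\bs{y}\|^2$ is minimized at $t=0$, and taking its right-derivative at $0$ gives precisely the inequality above. Compactness of $\mathcal{C}$ is not actually needed for the statement — closedness plus convexity suffices to make $\mathcal{P}_{\mathcal{C}}$ well defined — but this is immaterial for the argument.
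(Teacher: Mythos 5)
Your proof is correct and complete: the variational inequality $(\bs{x}-\mathcal{P}_{\mathcal{C}}(\bs{x}))^\top(\bs{y}-\mathcal{P}_{\mathcal{C}}(\bs{x}))\leq 0$ for all $\bs{y}\in\mathcal{C}$ is properly justified by the one-sided derivative argument, and summing its two instantiations (at $\bs{x}=\bs{u},\bs{y}=\mathcal{P}_{\mathcal{C}}(\bs{v})$ and at $\bs{x}=\bs{v},\bs{y}=\mathcal{P}_{\mathcal{C}}(\bs{u})$) yields exactly $\norm{\mathcal{P}_{\mathcal{C}}(\bs{u})-\mathcal{P}_{\mathcal{C}}(\bs{v})}^2\leq(\bs{u}-\bs{v})^\top(\mathcal{P}_{\mathcal{C}}(\bs{u})-\mathcal{P}_{\mathcal{C}}(\bs{v}))$. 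The paper does not prove this lemma at all --- it is stated as a known fact with a citation to Ghadimi et al.\ --- so there is no in-paper argument to compare against; your proof is the standard one, and your side remark that closedness and convexity suffice (compactness being unnecessary) is also accurate.
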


\begin{lemma}\label{lemma:2}Given a compact convex set $\mathcal{C} \subset \mathbb{R}^d$ and let $\mathcal{P}_{\mathcal{C}}(\cdot)$ be the projection operator on $\mathcal{C}$, then for any $\bs{c}\in \mathcal{C}$ and $\bs{u}\in \mathbb{R}^d, \bs{v}\in \mathbb{R}^d$, we have 
\begin{align*}
    \|\mathcal{P}_{\mathcal{C}}(\bs{c}+\bs{u})-\mathcal{P}_{\mathcal{C}}(\bs{c}+\bs{v})\| \leq \|\bs{u}-\bs{v}\|. 
\end{align*}
\end{lemma}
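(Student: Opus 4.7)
The plan is to derive Lemma \ref{lemma:2} as a direct corollary of the firmly non-expansive property stated in Lemma \ref{lemma:1}. The quantity we want to bound involves two projection inputs that differ by exactly $\bs{u}-\bs{v}$ (the $\bs{c}$ cancels out), so the hypothesis $\bs{c}\in\mathcal{C}$ is actually not needed beyond ensuring $\bs{c}+\bs{u}$ and $\bs{c}+\bs{v}$ live in $\mathbb{R}^d$; the essential content is the standard 1-Lipschitz (non-expansiveness) property of the Euclidean projection onto a convex set.

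Concretely, I would instantiate Lemma \ref{lemma:1} with the two points $\bs{a}:=\bs{c}+\bs{u}$ and $\bs{b}:=\bs{c}+\bs{v}$ in $\mathbb{R}^d$. This gives
\[
\|\mathcal{P}_{\mathcal{C}}(\bs{a})-\mathcal{P}_{\mathcal{C}}(\bs{b})\|^2 \;\leq\; (\bs{a}-\bs{b})^\top \bigl(\mathcal{P}_{\mathcal{C}}(\bs{a})-\mathcal{P}_{\mathcal{C}}(\bs{b})\bigr).
\]
Next I would apply the Cauchy--Schwarz inequality to the right-hand side to get
\[
\|\mathcal{P}_{\mathcal{C}}(\bs{a})-\mathcal{P}_{\mathcal{C}}(\bs{b})\|^2 \;\leq\; \|\bs{a}-\bs{b}\|\cdot\|\mathcal{P}_{\mathcal{C}}(\bs{a})-\mathcal{P}_{\mathcal{C}}(\bs{b})\|,
\]
and then divide through by $\|\mathcal{P}_{\mathcal{C}}(\bs{a})-\mathcal{P}_{\mathcal{C}}(\bs{b})\|$ (the case where this quantity equals zero is trivial, since the claimed bound holds automatically). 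Finally, substituting back $\bs{a}-\bs{b}=(\bs{c}+\bs{u})-(\bs{c}+\bs{v})=\bs{u}-\bs{v}$ collapses the right-hand side to $\|\bs{u}-\bs{v}\|$, which is exactly the desired inequality.

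There is essentially no hard step in this argument: the entire difficulty was already absorbed into Lemma \ref{lemma:1}, which encodes the variational characterization of projection. The only minor thing to be careful about is the degenerate case $\mathcal{P}_{\mathcal{C}}(\bs{a})=\mathcal{P}_{\mathcal{C}}(\bs{b})$, which must be handled separately before dividing, but the conclusion is immediate in that case. The role of the hypothesis $\bs{c}\in\mathcal{C}$ in the lemma statement appears to be stylistic (matching how the lemma is applied later, where $\bs{c}$ corresponds to an iterate $\bs{s}^t\in\mathcal{C}$ and $\bs{u},\bs{v}$ to candidate update directions); it does not enter the proof.
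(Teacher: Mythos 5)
Your proof is correct. Note, however, that the paper does not actually prove this lemma: both Lemma \ref{lemma:1} and Lemma \ref{lemma:2} are imported from the cited reference (Ghadimi et al.) without argument, so there is no in-paper proof to compare against. Your derivation --- instantiate the firmly non-expansive inequality of Lemma \ref{lemma:1} at $\bs{a}=\bs{c}+\bs{u}$ and $\bs{b}=\bs{c}+\bs{v}$, apply Cauchy--Schwarz to the right-hand side, and cancel one factor of $\|\mathcal{P}_{\mathcal{C}}(\bs{a})-\mathcal{P}_{\mathcal{C}}(\bs{b})\|$ after disposing of the degenerate case --- is the standard route and is exactly how this is established in the literature the paper cites. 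Your side observation is also accurate: the hypothesis $\bs{c}\in\mathcal{C}$ plays no role in the proof (the statement is just $1$-Lipschitzness of the projection applied to the pair $\bs{c}+\bs{u}$, $\bs{c}+\bs{v}$); it is carried in the statement only because the lemma is later invoked with $\bs{c}$ equal to an iterate $\bs{s}^t\in\mathcal{C}$ and $\bs{u},\bs{v}$ equal to $-\eta\bs{g}^t$ and $-\eta\nabla\Phi(\bs{s}^t)$ when bounding $\|\hat{\mathcal{G}}^t-\mathcal{G}^t\|$ by $\|\delta^t\|$.
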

\begin{proof} of Theorem \ref{thm:app-convergence}:

In the following, we denote 
\begin{align}
    \bs{g}^t= \mathcal{L}_{\mathcal{B}}(\bs{\theta}^*(\bs{m})) \nabla_{\bs{s}}\ln  p(\boldsymbol{m}|\boldsymbol{s}^t). \nonumber
\end{align}
In our algorithm, we update $\bs{s}$ as
\begin{align}
    \bs{s}^{t+1} = \mathcal{P}_{\mathcal{C}}\left(\bs{s}^t- \eta \bs{g}^t\right). \nonumber
\end{align}
Let the stochastic and deterministic gradient mappings be
\begin{align*}
    \hat{\mathcal{G}}^t =&  \frac{1}{\eta}\left(\bs{s}^{t}-\mathcal{P}_{\mathcal{C}}\left(\bs{s}^t- \eta \bs{g}^t\right)\right) = \frac{1}{\eta}\left(\bs{s}^{t}-\bs{s}^{t+1}\right),\nonumber\\
   \mathcal{G}^t =& \frac{1}{\eta}\left(\bs{s}^{t}-\mathcal{P}_{\mathcal{C}}\left(\bs{s}^t- \eta \nabla\Phi(\bs{s}^t)\right)\right) , \nonumber
\end{align*}
we can have
\begin{align}
    \Phi(\bs{s}^{t+1}) &\leq \Phi(\bs{s}^{t}) + \langle \nabla\Phi(\bs{s}^{t}), \bs{s}^{t+1}-\bs{s}^t \rangle + \frac{L}{2} \|\bs{s}^{t+1}-\bs{s}^t\|^2\nonumber\\
    & = \Phi(\bs{s}^{t}) - \eta \langle \nabla\Phi(\bs{s}^{t}), \hat{\mathcal{G}}^t \rangle + \frac{L\eta^2}{2} \|\hat{\mathcal{G}}^t\|^2\nonumber\\
    & = \Phi(\bs{s}^{t}) -\eta \langle \nabla\Phi(\bs{s}^{t})-\bs{g}^t+\bs{g}^t, \hat{\mathcal{G}}^t \rangle + \frac{L\eta^2}{2} \|\hat{\mathcal{G}}^t\|^2\nonumber\\
     & = \Phi(\bs{s}^{t}) -\eta \langle \bs{g}^t, \hat{\mathcal{G}}^t \rangle + \frac{L\eta^2}{2} \|\hat{\mathcal{G}}^t\|^2 + \eta \langle \delta^t, \hat{\mathcal{G}}^t \rangle (\mbox{ here }\delta^t =\bs{g}^t - \nabla\Phi(\bs{s}^{t}))\nonumber\\
     & \leq  \Phi(\bs{s}^{t}) -\eta \|\hat{\mathcal{G}}^t\|^2 + \frac{L\eta^2}{2} \|\hat{\mathcal{G}}^t\|^2 + \eta \langle \delta^t, \hat{\mathcal{G}}^t \rangle ~~(\mbox{Lemma} \ref{lemma:1})\nonumber\\
     & \leq  \Phi(\bs{s}^{t}) -(\eta-\frac{L\eta^2}{2}) \|\hat{\mathcal{G}}^t\|^2 + \eta \langle \delta^t, \hat{\mathcal{G}}^t \rangle \nonumber\\
     &=  \Phi(\bs{s}^{t}) -(\eta-\frac{L\eta^2}{2}) \|\hat{\mathcal{G}}^t\|^2 + \eta \langle \delta^t, \mathcal{G}^t \rangle + \eta \langle \delta^t, \hat{\mathcal{G}}^t- \mathcal{G}^t \rangle  \nonumber\\
     & \leq \Phi(\bs{s}^{t}) -(\eta-\frac{L\eta^2}{2}) \|\hat{\mathcal{G}}^t\|^2 + \eta \langle \delta^t, \mathcal{G}^t \rangle + \eta \| \delta^t\|\| \hat{\mathcal{G}}^t- \mathcal{G}^t \|  \nonumber\\
      &\leq  \Phi(\bs{s}^{t}) -(\eta-\frac{L\eta^2}{2}) \|\hat{\mathcal{G}}^t\|^2 + \eta \langle \delta^t, \mathcal{G}^t \rangle + \eta \| \delta^t\|^2. ~~(\mbox{Lemma} \ref{lemma:2})\nonumber
\end{align}
Therefore, we can get
\begin{align*}
    (\eta-\frac{L\eta^2}{2}) \|\hat{\mathcal{G}}^t\|^2 \leq \Phi(\bs{s}^{t}) - \Phi(\bs{s}^{t+1}) + \eta \langle \delta^t, \mathcal{G}^t \rangle + \eta \| \delta^t\|^2.
\end{align*}
Thus, we can obtain
\begin{align}
    \sum_{t=1}^T(\eta-\frac{L\eta^2}{2}) \|\hat{\mathcal{G}}^t\|^2 \leq \Phi(\bs{s}^{1}) - \Phi(\bs{s}^{T+1}) +\eta\sum_{t=1}^T\left( \langle \delta^t, \mathcal{G}^t \rangle +  \| \delta^t\|^2\right). \label{eqn:app-sum-bound}
\end{align}
Now, we turn to analyze the item $ \langle \delta^t, \mathcal{G}^t \rangle$ as follows:
\begin{align}
    \mathbb{E}\langle \delta^t, \mathcal{G}^t \rangle &= \mathbb{E}_{\bs{s}^t}\mathbb{E}_{\cdot|\bs{s}^t}\left(\langle  \bs{g}^t - \nabla\Phi(\bs{s}^{t}), \mathcal{G}^t \rangle|\bs{s}^t\right) =0, \label{eqn:app-delta-g}
\end{align}

For $\|\delta^t\|^2$,  we have 
\begin{align}
    \mathbb{E}\|\delta^t\|^2 = \mathbb{E} \|\bs{g}^t - \nabla\Phi(\bs{s}^{t})\|^2 \leq \sigma^2.\label{eqn:app-delta}
\end{align}

 Combining the inequalities (\ref{eqn:app-delta-g}), (\ref{eqn:app-sum-bound}) and (\ref{eqn:app-delta}), we can have 
\begin{align}
    \frac{1}{T}\sum_{t=1}^T \mathbb{E}\|\hat{\mathcal{G}}^t\|^2 \leq \frac{\Phi(\bs{s}^1) - \Phi^*}{(1-L\eta/2)T} + \frac{\sigma^2}{1-L\eta/2}. \label{eqn:app-hg}
\end{align}

Finally, we bound $\mathbb{E}\|\mathcal{G}^t\|^2$ as follows:
\begin{align}
    \mathbb{E} \|\mathcal{G}^t\|^2
    & \leq  2  \mathbb{E} \|\hat{\mathcal{G}}^t\|^2   +2\mathbb{E}\| \bs{g}^t-\nabla \Phi(\bs{s}^t)\|^2 \\
    & \leq  2  \mathbb{E} \|\hat{\mathcal{G}}^t\|^2  +2\sigma^2. \label{eqn:app-gt}
\end{align}

Combine inequalities (\ref{eqn:app-gt}) and (\ref{eqn:app-hg}), when $T\rightarrow \infty$, we can obtain 
\begin{align*}
     \frac{1}{T}\sum_{t=1}^T \mathbb{E} \|\mathcal{G}^t\|^2 &\leq \frac{2}{1-L\eta/2}\left(\frac{\Phi(\bs{s}^1) - \Phi^*}{T}  + (2-L\eta/2)\sigma^2\right)\rightarrow \frac{8-2L\eta}{2-L\eta}\sigma^2
\end{align*}

\end{proof}

\section{Project Calculation} \label{sec:proj}

The projection from $\bs{s}$ to $\mathcal{C}$ can be calculated by:


\begin{algorithm}[htb!]
\caption{Projection from $\bs{z}$ to $\mathcal{C}$}
\label{alg:proj}
\begin{algorithmic}[1]
\REQUIRE a vector $\bs{z}$.
\STATE Solve $v_1$ from $\bs{1}^\top[\min (1, \max(0, \bs{z}-v_{1}^{*}\mathbf{1}))] - K = 0.$ \\
\STATE $v_{2}^{*} \leftarrow \max(0, v_{1}^{*})$. \\
\STATE $\bs{s} \leftarrow \min (1, \max(0, \bs{z}-v_{2}^{*}\mathbf{1})).$ \\
\OUTPUT $\bs{s}$
\end{algorithmic}
\end{algorithm}
\begin{proof}
The projection from $\bs{z}$ to set $\mathcal{C}$ can be formulated in the following optimization problem:
\begin{align}
    &\min_{\bs{s}\in \mathbb{R}^n} \frac{1}{2}\|\bs{s}-\bs{z}\|^2,\nonumber\\
    s.t. & \mathbf{1}\top \bs{s}  \leq K \mbox{ and } 0\leq \bs{s}_i \leq 1.\nonumber
\end{align}
Then we solve the problem with Lagrangian multiplier method.
\begin{align}
    L(\bs{s},v) &= \frac{1}{2}\|\bs{s}-\bs{z}\|^2 + v(\mathbf{1}^\top \bs{s} -K)\\
    &=\frac{1}{2}\|\bs{s}-(\bs{z}-v\mathbf{1})\|^2 + v (\mathbf{1}^\top \bs{z}-K) -\frac{n}{2}v^2.
\end{align}
with $v \geq 0 \mbox{ and } 0 \leq \bs{s}_i \leq 1$.
Minimize the problem with respect to $\bs{s}$, we have 
\begin{align}
    \tilde{\bs{s}} = \mathbf{1}_{\bs{z}-v\mathbf{1}\geq 1} + (\bs{z}-v\mathbf{1})_{1>\bs{z}-v\mathbf{1}>0}
\end{align}
Then we have
\begin{align}
g(v)=&L(\tilde{\bs{s}},v) \nonumber\\
   =& \frac{1}{2}\|[\bs{z}-v\mathbf{1}]_{-} + [\bs{z}-(v+1)\mathbf{1}]_{+}\|^2 \nonumber \\
   &+ v (\mathbf{1}^\top \bs{z}-s) -\frac{n}{2}v^2 \nonumber  \\
=&\frac{1}{2}\|[\bs{z}-v\mathbf{1}]_{-}\|^2 +\frac{1}{2}\|[\bs{z}-(v+1)\mathbf{1}]_{+}\|^2\nonumber \\
&+ v (\mathbf{1}^\top \bs{z}-s) -\frac{n}{2}v^2,  v\geq 0. \nonumber \\
g'(v)=& \mathbf{1}^\top [v\mathbf{1}-\bs{z}]_{+} +\mathbf{1}^{\top} [(v+1)\mathbf{1}-\bs{z}]_{-}\nonumber \\
&+(1^T\bs{z}-s)-nv \nonumber \\
=&\mathbf{1}^\top\min (1, \max(0, \bs{z}-v\mathbf{1})) - K,v\geq 0.\nonumber
\end{align}
It is easy to verify that $g'(v)$ is a monotone decreasing function with respect to $v$ and we can use a bisection method solve the equation $g'(v) = 0$ with solution $v^*_1$. Then we get that $g(v)$ increases in the range of $(-\infty, v^*_1$] and decreases in the range of $[v^*_1, +\infty)$. The maximum of g(v) is achieved at 0 if $v^*_1 \leq 0$ and $v^*_1$ if $v^*_1 >0$. Then we set $v^*_2 = max(0, v^*_1)$. Finally we have 
\begin{align}
    \bs{s}^* =& \mathbf{1}_{\bs{z}-v_2^*\mathbf{1}\geq 1} + (\bs{z}-v_{2}^*\mathbf{1})_{1>\bs{z}-v_2^*\mathbf{1}>0}\\ 
    =&\min (1, \max(0, \bs{z}-v_{2}^{*}\mathbf{1})).
\end{align}
\end{proof}

\section{Experiment Details} \label{sec:experimental-configuration}
We use the following hyper-parameters during optimization for our experiments. For the inner-loop, the model is trained for 100 epochs using SGD with learning rate of 0.1 and momentum of 0.9. For the outer-loop, the probabilities $s$ are optimized by adam with learning rate of 2.5 and cosine scheduler. The outer-loop is updated for 500-2000 times. Note that gumbel softmax can also be used as an alternative for PGE when calculating the hyper-gradient, which sometimes demonstrates better stability. In implemetation, we combine gumbel softmax and PGE to achieve a balance between efficiency and accuracy. 

\textbf{Label Noise and Class Imbalance}
For pairwise noise, the label of a particular class has a probability $p$ to be flipped to the adjacent class; for symmetric noise, a class has a probability $\frac{p}{n-1}$ to be changed to any other $n-1$ classes. For the class imbalance experiment, we adopt similar setting as in \cite{cui2019class}. Specifically, the number of training samples per class is exponentially reduced according to the function $n_i'=n_i\sigma^i$, where $i$ is the class index. We define dataset imbalance factor as $\frac{n_{max}}{n_{min}}$, where $n_{max}$ and $n_{min}$ are the number of samples in the largest and smallest classes, respectively.

\textbf{Data Summarization} We follow \cite{borsos2020coresets} and use a convolutional neural network stacked with two blocks of convolution, dropout, max-pooling and ReLU activation for MNIST, and use ResNet18\cite{he2016deep} for the CIFAR10 experiment. For baselines K-center clustering using the embedding from last layer \cite{sener2017active}, iCaRL's selection \cite{rebuffi2017icarl} and Hardest sampling \cite{aljundi2019task}, which depend on the model embedding, we pretrain a feature extractor using 1000 uniformly sampled data. 

\textbf{Continual learning} To make a fair comparison, we follow \cite{borsos2020coresets} to keep 1000 subsamples for each task other than SplitCIFAR100 (where all data is used). For PermMNIST, we use a fully-connected network with two hidden layers of 100 neurons, followed by ReLU activation and dropout with probability 0.2, the memory size is set to 100. For SplitMNIST, we use the same CNN architecture as in data summerization task, the memory size is 500. For SplitCIFAR10, we adopt ResNet18 similar to Section \ref{sec:coreset}, the memory is set to 200.
As for SplitCIFAR100, all the training data is used and the memory size is set to 2000. For experiments with label noise and class imbalance, a balanced held-out validation dataset of size 100 is used.

\textbf{Streaming} We follow \cite{borsos2020coresets} to modify PermMNIST and SplitMNIST used in continual learning by concatenating all the tasks and stream the data in subsets of 125. The memory size in this setting is set to 100 and separated into 10 slots. Merge reduced introduced in \cite{borsos2020coresets} is also adopted for fair comparison.

In feature selection experiments, we uniformly sample 1000 training data, while all the testing data are used to evaluate the trained model. All the experiments are repeated with 5 different random seeds.

\section{More Experiments}\label{sec:more-results}
\subsection{Comparison with Training on Entire Dataset under Label Noise and Class Imbalance Settings}
\begin{figure}[ht!]
	\centering
    \vspace{-2mm}
	\includegraphics[width=0.8\textwidth]{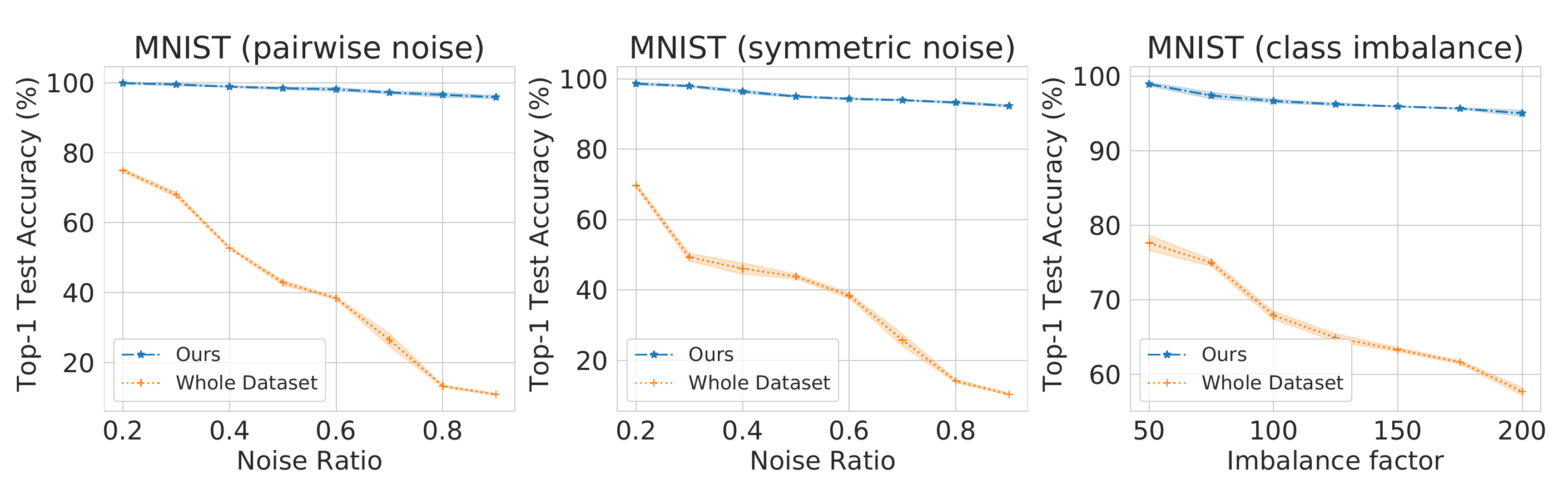}
    \vspace{-3mm}
    \caption{We compare the performance of the trained model on the coreset selected by our method with that trained on the entire dataset, where the dataset contains label noise and class imbalance. We can see that the training with the selected coreset surpasses the entire dataset by a large margin, which is because the coreset can effectively remove the label noise and automatically balance the data in different classes.}
    \vspace{-3mm}
	\label{fig:ours_vs_whole}
\end{figure}
As mentioned in Section \ref{sec:introduction} of the main paper, training on the coreset may sometimes even achieve better performance than training on the entire dataset. We verify this via conductin experiment on dataset with label noise and class imbalance. We can see that training the model using the entire dataset in these cases lead to failure due to the poor data quality. On the other hand, our coreset selection method has the effect of removing the label noise and automatically balancing the data in each class.

\subsection{Transferability of found coresets with various sizes among different networks}
\begin{table}[htb!]
\caption{Transferability of found coresets with various sizes among different networks. 
}
\label{tab:transfer}
\begin{center}
\begin{tabular}
{
l m{1cm}<{\centering} m{1cm}<{\centering} m{1cm}<{\centering} m{1cm}<{\centering}
}
\toprule
&\multicolumn{2}{c}{Net1}&\multicolumn{2}{c}{Net2}\\Dataset / Size&
Random&Coreset&Random&Coreset\\\cmidrule(l){2-3}\cmidrule(l){4-5}
MNIST (100)&87.3 &95.01& 70.90 &75.60\\
CIFAR10 (4000)&63.50 &78.35& 67.57&79.05\\
\bottomrule
\end{tabular}
\end{center}
\vskip -0.1in
\end{table}

We use the coreset selected by one network to train different networks. In Table \ref{tab:transfer}, we show the results of the following experiment: The coresets is searched by Net1, which is then used to train both Net1 itself and Net2. The found coreset outperforms uniform sampling for both 
networks, which verifies the transferability of the found coreset. For CIFAR10, Net1 is ResNet18 and Net2 is ResNet32; for MNIST, Net1 is Convnet and Net2 is MLP.

\section{Future Directions} \label{sec:future_works}
Our coreset selection also performs rather faster than \citet{borsos2020coresets}, it stills faces computational difficulties when applied to larger datasets like ImageNet-1K. One possible solution is to employ model sparsity to speed-up the training process or make the trained model smaller \cite{Shao_2019_CVPR, zhou2021efficient, zhou2021effective}.

\end{document}